\newtheorem{theorem}{Theorem}
\newtheorem{definition}{Definition}
\newtheorem{assumption}{Assumption}
\newtheorem{lemma}{Lemma}
\newtheorem{corollary}{Corollary}
\newtheorem*{remark}{Remark}
\begin{document}
\sloppy

\title{AQUILA: Communication Efficient Federated Learning with Adaptive Quantization in Device Selection Strategy}

\author{Zihao Zhao$^{*}$,
        Yuzhu Mao$^{*}$,
        Zhenpeng Shi,
        Yang Liu,
        Tian Lan,
        Wenbo Ding\textsuperscript{$\dag$}, and
        Xiao-Ping Zhang
\thanks{$^{*}$ These authors contribute equally.}
\thanks{\textsuperscript{$\dag$} Corresponding author.}
\thanks{Z.~Zhao, Y.~Mao, Z.~Shi, W.~Ding and X.-P. Zhang are with Tsinghua-Berkeley Shenzhen Institute, Tsinghua Shenzhen International Graduate School, Tsinghua University, China. W.~Ding is the corresponding author. E-mail: (\{zhao-zh21, myz20, shizp22\} @mails.tsinghua.edu.cn, ding.wenbo@sz.tsinghua.edu.cn. W.~Ding is also with RISC-V International Open Source Laboratory, Shenzhen, China, 518055.)}
\thanks{Y.~Liu is with the Institute for AI Industry Research (AIR), Tsinghua University, China. E-mail: (liuy03@air.tsinghua.edu.cn). Y.~Liu and W.~Ding are also with Shanghai AI Lab, Shanghai, China.}
\thanks{T.~Lan is with the Department of Electrical and Computer Engineering, George Washington University, DC, USA. Email: (tlan@gwu.edu)}
\thanks{X.-P.~Zhang is also with the Department of Electrical, Computer and Biomedical Engineering, Ryerson University, Toronto, ON M5B 2K3, Canada. E-mail: (xzhang@ee.ryerson.ca)}
}

\markboth{Journal of \LaTeX\ Class Files,~Vol.~14, No.~8, August~2021}%
{Shell \MakeLowercase{\textit{et al.}}: A Sample Article Using IEEEtran.cls for IEEE Journals}

\IEEEpubid{0000--0000/00\$00.00~\copyright~2023 IEEE}

\maketitle

\begin{abstract}
The widespread adoption of Federated Learning (FL), a privacy-preserving distributed learning methodology, has been impeded by the challenge of high communication overheads, typically arising from the transmission of large-scale models. Existing adaptive quantization methods, designed to mitigate these overheads, operate under the impractical assumption of uniform device participation in every training round. Additionally, these methods are limited in their adaptability due to the necessity of manual quantization level selection and often overlook biases inherent in local devices' data, thereby affecting the robustness of the global model. In response, this paper introduces AQUILA (\underline{a}daptive \underline{qu}antization in dev\underline{i}ce se\underline{l}ection str\underline{a}tegy), a novel adaptive framework devised to effectively handle these issues, enhancing the efficiency and robustness of FL. AQUILA integrates a sophisticated device selection method that prioritizes the quality and usefulness of device updates. Utilizing the exact global model stored by devices, it enables a more precise device selection criterion, reduces model deviation, and limits the need for hyperparameter adjustments. Furthermore, AQUILA presents an innovative quantization criterion, optimized to improve communication efficiency while assuring model convergence. Our experiments demonstrate that AQUILA significantly decreases communication costs compared to existing methods, while maintaining comparable model performance across diverse non-homogeneous FL settings, such as Non-IID data and heterogeneous model architectures.
\end{abstract}

\begin{IEEEkeywords}
Federated learning, communication efficiency, optimization.
\end{IEEEkeywords}

\section{Introduction}
\IEEEPARstart{W}{ith} proliferation of ubiquitous sensing and computing devices, the Internet of things (IoT), as well as many other distributed systems, have gradually grown from mere concepts to a reality, bringing dramatic convenience to people's daily lives \cite{vehicularIoT, liu2020fedvision, Googlekeyboard}. In order to fully exploit these distributed computing resources, distributed learning provides a promising framework that parallels the performance of traditional centralized learning schemes. Nevertheless, concerns about the privacy and security of sensitive data during the updating and transmission processes persist. \emph{federated learning (FL)} \cite{mcmahan2017communication}, a methodology developed to address these issues, has been developed, allows distributed devices to collaboratively learn a global model without privacy leakage by keeping private data isolated and masking transmitted information with secure approaches. On account of its potential for privacy-preservation in privacy sensitive fields such as finance and health, FL has garnered substantial from both academia and industry in recent years.

Unfortunately, in many FL applications, such as image classification and objective recognition, the trained model tends to be high-dimensional, resulting in considerable communication costs. Consequently, communication efficiency has emerged as an imperative challenge in FL. In response, \cite{sun2020lazily} proposed the lazily-aggregated quantization (LAQ) method to bypass unnecessary parameter uploads by estimating the gradient innovation: the difference between the current unquantized gradient and the previously quantized gradient. 
Moreover, \cite{mao2021communication} devised an adaptive quantized gradient (AQG) strategy based on LAQ to dynamically select the quantization level within some artificially given numbers during the training process. Nevertheless, AQG has proven insufficiently adaptive due to the difficulty of manually selecting the appropriate quantization levels in complex FL environments.
Alternatively, \cite{jhunjhunwala2021adaptive} introduced an adaptive quantization rule for FL named AdaQuantFL, which searches in a given range for an optimal quantization level and achieves a better error-communication trade-off.

\IEEEpubidadjcol
Existing research on adaptive quantization primarily presumes that all devices in the FL system participate in each training round. However, this assumption is both unrealistic and impracticable. Despite the enormous alleviation in communication overhead via adaptive quantization methods, bandwidth constraints may still be surpassed if all devices transmit their model updates to the server, due to sheer number of devices. Currently, \cite{honig2022dadaquant} proposed a doubly-adaptive quantization algorithm, DAdaQuant, that dynamically adjusts the quantization level across time and devices, and randomly selects $K$ devices per round. Nonetheless, this random sampling provides no theoretical guarantee and could neglect the biases inherent in local devices' data, potentially yielding to underrepresent or overfit to specific patterns and resulting in a less robust global model \cite{cho2020client}. In response to these limitations, this paper introduces a superior adaptive framework, AQUILA, that resorts to a sophisticated device selection method to take the quality and usefulness of the devices' updates into account. Specifically, instead of relying on the estimation of the global gradient such as some existing selection criteria, AQUILA adopts a more precise device selection criterion. This device selection approach uses the exact global model stored by devices and necessitates fewer hyperparameters adjustments. Moreover, we intend to minimize the model deviation induced by the device selection to garner a novel quantization criterion that significantly improves communication efficiency and still offers a convergence guarantee. The contributions of this paper are trifold.



\begin{itemize}
\setlength{\parsep}{0.5ex}
    \item We propose an innovative FL procedure with \textbf{a}daptive \textbf{qu}ant\textbf{i}zation of \textbf{l}azily-\textbf{a}ggregated gradients termed AQUILA, which simultaneously adjusts the communication frequency and the quantization precision in a synergistic fashion.
    
    \item We derive an adaptive quantization strategy from a new perspective that minimizes the model deviation introduced by the device selection. Subsequently, we present a new device selection criterion that is more precise and saves more device storage. Furthermore, we provide a convergence analysis of AQUILA under the generally non-convex case and the Polyak-Łojasiewicz condition.
    
    \item Except for normal FL settings, such as independent and identically distributed (IID) data environment, we experimentally evaluate the performance of AQUILA in a number of \textbf{non-homogeneous} FL settings, such as non-independent and non-identically distributed (Non-IID) local dataset and various heterogeneous model aggregations. The evaluation results reveal that AQUILA considerably mitigates the communication overhead compared to a variety of state-of-art algorithms.
\end{itemize}

\section{Background and Related Works}
Consider an FL system with one central parameter server and a device set $\mathcal{M}$ with $ M=| \mathcal{M} | $ distributed devices to collaboratively train a global model parameterized by $\boldsymbol{\theta}\in \mathbb{R}^{d}$. Each device $m\in\mathcal{M}$ has a private local dataset $\mathcal{D}_{m}=\{(\boldsymbol{x}_{1}^{(m)}, \boldsymbol{y}_{1}^{(m)}), \cdots, (\boldsymbol{x}_{n_m}^{(m)}, \boldsymbol{y}_{n_m}^{(m)})\}$ of $n_m$ samples. The federated training process is typically performed by solving the following optimization problem
\begin{equation}
\begin{aligned}
    \min _{\boldsymbol{\theta}\in \mathbb{R}^{d}} f(\boldsymbol{\theta})&= \frac{1}{M}\sum_{m=1}^{M} f_{m}(\boldsymbol{\theta}) 
    \label{emp_risk}
\end{aligned}
\end{equation}
where $f:\mathbb{R}^{d}\rightarrow\mathbb{R}$ denotes the empirical risk, and $f_{m}:\mathbb{R}^{d}\rightarrow\mathbb{R}$ denotes the local objective based on the private data $\mathcal{D}_m$ of the device $m$.
The FL training process is conducted by iteratively performing local updates and global aggregation as proposed in \cite{mcmahan2017communication}. First, at communication round $k$, each device $m$ receives the global model $\boldsymbol{\theta}^{k}$ from the parameter server and trains it with its local data $D_m$. 
Subsequently, it sends the local gradient $\nabla f_{m}(\boldsymbol{\theta}^{k})$ to the central server, and the server will update the global model with learning rate $\alpha$ by
\begin{align}
\boldsymbol{\theta} ^{k+1} \coloneqq \boldsymbol{\theta} ^{k} - \frac{\alpha}{M}\sum_{m \in \mathcal{M}}\nabla f_{m}(\boldsymbol{\theta}^{k}).
\label{ori_update}
\end{align} 
\begin{definition}
\label{def1}
\textit{For more efficiency, each device only uploads the quantized deflection between the full gradient $\nabla f_m(\boldsymbol{\theta}^{k})$ and the last quantization value $\boldsymbol{q}_{m}^{k-1}$ utilizing a quantization operator $\mathcal{Q}: \mathbb{R}^d \rightarrow \mathbb{R}^d$, i.e.,}
\begin{equation}
    \Delta \boldsymbol{q}_{m}^{k} = \mathcal{Q}(\nabla f_m(\boldsymbol{\theta}^{k})-\boldsymbol{q}_{m}^{k-1}).
\end{equation}
\end{definition} 
For communication frequency reduction, the previous lazy aggregation strategy allows the device $m \in \mathcal{M}$ to upload its newly-quantized gradient innovation at epoch $k$ only when the change in local gradient is sufficiently larger than a threshold. Hence, the quantization of the local gradient $\boldsymbol{q}_{m}^{k}$ of device $m$ at epoch $k$ can be calculated by
\begin{equation}
\boldsymbol{q}_{m}^{k} \coloneqq 
\begin{cases}
\hfil \boldsymbol{q}_{m}^{k-1} , & \text {if } 
\begin{split}
\big\|\mathcal{Q}(\nabla f_m(\boldsymbol{\theta}^{k})-\boldsymbol{q}_{m}^{k-1})\big\|_2^2 \\
\leqslant  Threshold
\end{split}
\\ 
\hfil \boldsymbol{q}_{m}^{k-1} + \Delta \boldsymbol{q}_{m}^{k},
 & \text { otherwise }
 \end{cases}.
\label{lazy_aggre_laq}
\end{equation}
If the device $m$ skips the upload of $\Delta \boldsymbol{q}_{m}^{k}$, the central server will reuse the last gradient $\boldsymbol{q}_{m}^{k-1}$ for aggregation. Therefore, the global aggregation rule can be changed from \eqref{ori_update} to:

\begin{align}
\label{lazy_aggre}
\boldsymbol{\theta} ^{k+1} &= \boldsymbol{\theta} ^{k} - \frac{\alpha}{M}\sum_{m \in \mathcal{M}}\boldsymbol{q}_{m}^{k} \\
&=\boldsymbol{\theta} ^{k}-\frac{\alpha}{M}\sum_{m \in \mathcal{M}^k}\left(\boldsymbol{q}_{m}^{k-1} + \Delta \boldsymbol{q}_{m}^{k}\right)- 
\frac{\alpha}{M} \sum_{m \in \mathcal{M}_{c}^{k}}\boldsymbol{q}_{m}^{k-1},\nonumber
\end{align}
where $\mathcal{M}^{k}$ denotes the subset of devices that upload their quantized gradient innovation, and $\mathcal{M}_c^k = \mathcal{M}\setminus \mathcal{M}^{k}$ denotes the subset of devices that skip the gradient update and reuse the old quantized gradient at epoch $k$. 

For AdaQuantFL, it is proposed to achieve a better error-communication trade-off by adaptively adjusting the quantization levels during the FL training process. Specifically, AdaQuantFL computes the optimal quantization level $(b^{k})^{*}$ by $(b^{k})^{*} =\lfloor\sqrt{{f(\boldsymbol{\theta}^{0})}/{f(\boldsymbol{\theta}^{k})}}\cdot b_{0}\rfloor$, where $f(\boldsymbol{\theta}^{0})$ and $f(\boldsymbol{\theta}^{k})$ are the global objective loss defined in \eqref{emp_risk}.

However, AdaQuantFL transmits quantized gradients \textbf{for all local devices at  every communication round}. In order to skip unnecessary communication rounds and adaptively adjust the quantization level for each communication jointly, a naive approach is to quantize lazily aggregated gradients with AdaQuantFL. Nevertheless, it fails to achieve efficient communication for several reasons. First, given the descending trend of training loss, AdaQuantFL's criterion may lead to a high quantization bit number even exceeding 32 bits in the training process (assuming a floating point is represented by 32 bits in our case), which is too large for cases where the global convergence is already approaching and makes the quantization meaningless. Second, a higher quantization level results in a smaller quantization error, leading to a lower communication threshold in the lazy aggregation criterion \eqref{lazy_aggre_laq} and thus a higher transmission frequency.

Consequently, it is desirable to develop a more efficient adaptive quantization method in the device selection setting to improve communication efficiency in FL systematically.

\begin{table*}[htp]
    \centering
    \caption{Overview of previous work on adaptive quantization strategies and comparsion with AQUILA.}
    \label{theoretic_related_work}
    \resizebox{1\textwidth}{!}{%
    \begin{tabular}{c|c|c|c|c|c|c}
        \toprule
        \multirow{2}{*}{\begin{tabular}{@{}c@{}}Adaptive\\ method\end{tabular}} & \multirow{2}{*}{\begin{tabular}{@{}c@{}}Optimization\\ objective\end{tabular}} & \multirow{2}{*}{\begin{tabular}{@{}c@{}}Additional\\ constrain\end{tabular}} & \multirow{2}{*}{\begin{tabular}{@{}c@{}}Convergence\\ guarantee\end{tabular}} & \multirow{2}{*}{\begin{tabular}{@{}c@{}}Non-IID\\ devices\end{tabular}} & \multirow{2}{*}{\begin{tabular}{@{}c@{}}Heterogeneous\\ model\end{tabular}} & \multirow{2}{*}{\begin{tabular}{@{}c@{}}Both text and \\ vision datasets\end{tabular}} \\
        &&&&&&\\
        \midrule
        AdaQuantFL~\cite{jhunjhunwala2021adaptive} & Convergence upper bound & Null & \checkmark & \checkmark & $\times$ & $\times$ \\
        \midrule
        FedDQ~\cite{qu2022feddq} & Convergence upper bound & Total communication costs & \checkmark & $\times$ & $\times$ & $\times$ \\
        \midrule
        Lin et al.~\cite{lin2021channel} & \begin{tabular}{@{}c@{}}SNR of the channel noise \\ \& the quantization noise.\end{tabular} & Total communication costs & \checkmark & $\times$ & $\times$ & $\times$ \\
        \midrule
        AdaGQ~\cite{liu2023communication} & \begin{tabular}{@{}c@{}}Global loss \&\\ wall-clock training time\end{tabular} & Total communication costs & $\times$ & \checkmark & $\times$ & $\times$ \\
        \midrule
        AQeD~\cite{liu2022ensemble} & Global loss & \begin{tabular}{@{}c@{}}Total wireless bandwidth \&\\ total KL-divergence\end{tabular} & \checkmark & \checkmark & \checkmark & $\times$ \\
        \midrule
        DAdaQuant~\cite{honig2022dadaquant} & Total communication costs & Quantization error & $\times$ & \checkmark & $\times$ & $\times$ \\
        \midrule
        \textbf{AQUILA (ours)} & Model deviation & Null & \checkmark & \checkmark & \checkmark & \checkmark \\
        \bottomrule
    \end{tabular}%
    }
\end{table*}
\textbf{Related works on adaptive quantization algorithms.}
Numerous studies have delved into adaptive quantization within FL. For one thing, From a heuristic viewpoint, some research acknowledges the varied communication bandwidths among heterogeneous edge devices in FL.
For instance, Qu et al.~\cite{qu2020quantization} introduce an adaptive quantization strategy that sets the quantization level in proportion to a device's local communication bandwidth.
Meanwhile, CDAG-FL\cite{li2023adaptive} differentiates quantization levels for individual model layers, leveraging the K-Means algorithm for selection. Sun et al.~\cite{sun2020adaptive} establish the adaptive quantization level considering the gradient's total bit length and a predefined maximum throughput, albeit with the inclusion of extra parameters.

In contrast, other research ventures into adaptive quantization from a theoretical view. The primary distinction among these studies is the methodology employed to formulate the optimization problem with respect to the quantization level.
One notable group focuses on convergence analysis. For example, AdaQuantFL~\cite{jhunjhunwala2021adaptive} establishes an error upper bound for the expected loss function and minimizes this bound in relation to the quantization level, pinpointing the optimal level. This method, however, yields a rising trend in quantization levels, consequently increasing communication overheads. To counteract this, FedDQ~\cite{qu2022feddq} optimizes the convergence upper bound, incorporating communication volume constraints.
Beyond convergence-bound optimization, Lin et al.\cite{lin2021channel} endeavor to optimize the signal-to-noise ratio (SNR), considering channel noise, quantization noise, and an overarching quantization level constraint. AdaGQ\cite{liu2023communication} focuses on shortening wall-clock training time, while AQeD~\cite{liu2022ensemble} roots its approach in clustering, categorizing devices into clusters with similar local models and diverse quantization levels. Their augmented loss function uniquely combines ensemble distillation loss, quantization levels, and wireless resource limitations.
Furthermore, a particularly pertinent work, DAdaQuant~\cite{honig2022dadaquant}, introduces a doubly-adaptive quantization algorithm that adjusts quantization levels both temporally and across devices, selecting $K$ devices per iteration. 
Nevertheless, in comparison to our method, their random sampling lacks a solid theoretical underpinning, potentially resulting in biases and, subsequently, the global model's diminished robustness~\cite{cho2020client}. \Cref{theoretic_related_work} provides an overview of these theoretical works and highlights our contributions: 1) We introduce a fresh perspective on determining optimal quantization by minimizing model deviation due to devices skipping; 2) Our objective function is free from additional constraints; 3) We establish a convergence assurance for AQUILA and demonstrate its efficacy across diverse FL scenarios.

\section{Adaptive Quantization in Device Selection Strategy}
Given the above limitations of the naive joint use of the existing adaptive quantization criterion and device selection strategy, this paper aims to design a unifying procedure for communication efficiency optimization where the quantization level and communication frequency are considered synergistically and interactively.

\subsection{Precise device selection criterion}
\label{precise_skip}

First, we introduce the definition of a deterministic rounding quantizer and its corresponding quantization error.

\begin{definition}
(Deterministic mid-tread quantizer). \textit{Every element of the gradient innovation of device $m$ at epoch $k$ is mapped to an integer $[{\boldsymbol{\psi}}_{m}^{k}]_i$ as $\forall i \in \{1,2,...,d\}$ }
\begin{equation}
\begin{aligned}
\label{psi_q}
\left[{\boldsymbol{\psi}}_{m}^{k}\right]_i=&\left\lfloor\frac{\left[\nabla f_m(\boldsymbol{\theta}^k)\right]_i-\left[\boldsymbol{q}_{m}^{k-1}\right]_i+R_m^k}{2 \tau_m^k R_m^k}\!+\!\frac{1}{2}\right\rfloor,
\end{aligned}
\end{equation}
where $\nabla f(\boldsymbol{\theta}_m^k)$ denotes the current unquantized gradient, $R_m^k = \|\nabla f_m(\boldsymbol{\theta}^{k})-\boldsymbol{q}_{m}^{k-1}\|_\infty$ denotes the quantization range, $b_m^k$ denotes the quantization level, and $\tau_m^k:=1 /(2^{b_m^k}-1)$ denotes the quantization granularity.
\end{definition}

\begin{figure}[htb]
    \centering
    \includegraphics[width=0.8\linewidth]{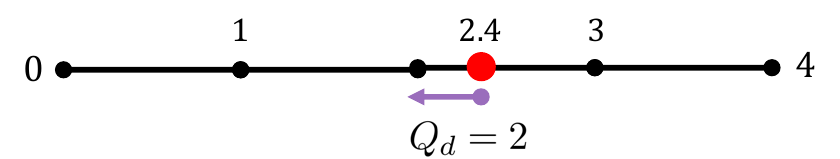}
    \caption{An example of deterministic mid-tread quantizer. In this figure, suppose the step-size $\Omega = 1$ and the original value of $v_i$ is equal to $2.4$. According to the quantizer, $Q_d(v_i)$ will be mapped to $\left\lfloor v_i \right\rfloor = 2$.}
    \label{quant_process}
\end{figure}
For the intuitions of the quantization operator, we can consider a simpler version of the above quantizer: 
$$
Q_d\left(v_i\right)=\left\lfloor v_i / \Omega\right\rfloor * \Omega.
$$
where $Q_d(\cdot)$ denotes the quantization operator (a.k.a, the quantizer) with the quantization level $d$, $\boldsymbol{v}$ denotes the vector needed to be quantized, and $\Omega$ denotes the step-size of the deterministic quantization. \Cref{quant_process} illustrates the quantization process.

The choice of a deterministic quantizer is primarily driven by its computational efficiency, especially for low-resource devices in FL scenarios. In contrast, stochastic quantization methods introduce overheads due to the generation of random numbers for each weight update. Consequently, they have not been widely adopted in practice, as highlighted in \cite{gholami2022survey}. Moreover, in situations where precise weight updates are imperative, such as in fine-tuning pre-trained models, the inherent noise from stochastic quantization might result in divergence or sub-optimal convergence.
\begin{definition}(Quantization error). \textit{
The global quantization error $\boldsymbol{\varepsilon}^k$ is defined by the subtraction between the current unquantized gradient $\nabla f (\boldsymbol{\theta}^k)$ and its quantized value $\boldsymbol{q}^{k-1} + \Delta \boldsymbol{q}^k$, i.e.,}
\begin{equation}
    \boldsymbol{\varepsilon}^k = \nabla f (\boldsymbol{\theta}^k) - \boldsymbol{q}^{k-1} - \Delta \boldsymbol{q}^k,
    \label{def_quanti_error}
\end{equation}
where the current global terms can be computed as $\nabla f (\boldsymbol{\theta}^k) \!=\! \sum_{m \in \mathcal{M}} \nabla f_m (\boldsymbol{\theta}^k), \Delta 
\boldsymbol{q}^{k} \!=\! \sum_{m \in \mathcal{M}} \Delta \boldsymbol{q}_m^{k},   \boldsymbol{q}^{k-1} \!=\! \sum_{m \in \mathcal{M}} \boldsymbol{q}_m^{k-1} .$
\end{definition}

In AQUILA, we propose a novel communication criterion aimed at preventing the unintentional oversight of device group expansions: for $m \in \mathcal{M}_c^k$, the device $m$ will skip its model transmission to the server at epoch $k$ if the following inequality is satisfied:
\begin{equation}
\begin{aligned}
 \left\| \Delta \boldsymbol{q}_{m}^{k}\right\|_2^2 + \left\|\boldsymbol{\varepsilon}_m^{k}\right\|_2^2 &\leqslant \frac{\beta}{ \alpha^2}\left\| \boldsymbol{\theta}^{k}-\boldsymbol{\theta}^{k-1} \right\|_2^2,
\end{aligned}
\label{skip_rule}
\end{equation}
where $\beta \geqslant 0$ is a tuning factor. Note that this skipping rule is employed at epoch $k$, in which each device $m$ calculates its quantized gradient innovation $\Delta \boldsymbol{q}_{m}^{k}$ and quantization error $\boldsymbol{\varepsilon}_m^{k}$, then utilizes this rule to decide whether uploads $\Delta \boldsymbol{q}_{m}^{k}$.

Instead of storing a large number of previous model parameters as LAQ, the strength of \eqref{skip_rule} is that AQUILA directly utilizes the global model for two adjacent rounds as the skip condition, which does not need to estimate the global gradient (more precise), requires fewer hyperparameters to adjust, and considerably reduces the storage pressure of local devices. This is especially important for small-capacity devices (e.g., sensors) in practical IoT scenarios. Furthermore, with the given threshold, AQUILA has a good theoretical property. The theoretical analysis of AQUILA is easier to follow with no Lyapunov function introduced as in LAQ. And the result in \ref{development} also shows that AQUILA can achieve a better convergence rate under the non-convex case and the PL condition.

\subsection{Optimal quantization level}
As mentioned before, AQUILA intends to minimize the model deviation induced by the device selection to deduce how each local device chooses the optimal quantization level. First, we introduce the definition of the fully-aggregated model.
\begin{definition}(Fully-aggregated model). 
\textit{The fully-aggregated model $\boldsymbol{\Tilde{{\theta}}}$ \textbf{without} device skipping at epoch $k$ is computed by}
\begin{equation}
\boldsymbol{\Tilde{{\theta}}} ^{k+1} = \boldsymbol{\theta} ^{k}- \frac{\alpha}{M}\sum_{m \in \mathcal{M}}\left(\boldsymbol{q}_{m}^{k-1}+ \Delta \boldsymbol{q}_{m}^{k} \right).
\label{appendix1}
\end{equation}
\end{definition}
\begin{lemma}
    \textit{The influence of device skipping at communication round $k$ can be bounded by}
\begin{align}
\left\|\Tilde{\boldsymbol{\theta}}^k\!-\!\boldsymbol{\theta}^k\right\|_2^2 \!\leqslant&\!\frac{4\alpha^2|\mathcal{M}_c^k| }{M^2}\!\sum_{m \in \mathcal{M}_c^{k}} \Big( \big( \left\|\nabla f_m(\boldsymbol{\theta}^k)\!-\!\boldsymbol{q}_{m}^{k-1}\right\|_2\! \nonumber
\\
&-\!\left\|\tau_m^k R_m^k \boldsymbol{1}\right\|_2 \big)^2\!+\!4(R_m^k)^2 d\!+\!\frac{d}{2} \Big).
\end{align}
\label{lemma3.3}
\end{lemma}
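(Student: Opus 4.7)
The plan is to unroll the difference $\Tilde{\boldsymbol{\theta}}^k - \boldsymbol{\theta}^k$ into the contributions of skipped devices, apply Jensen's inequality to convert a norm-of-sum into a sum-of-norms, and then bound each skipped device's quantized innovation using the mid-tread quantizer's error properties.

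First, I would subtract the actual update rule \eqref{lazy_aggre} from the fully-aggregated rule \eqref{appendix1} (aligning indices). The matching terms for $m \in \mathcal{M}^k$ cancel, and for each $m \in \mathcal{M}_c^k$ what remains is precisely the unused innovation $\Delta\boldsymbol{q}_m^k$. Hence
\[
\Tilde{\boldsymbol{\theta}}^k - \boldsymbol{\theta}^k = -\frac{\alpha}{M}\sum_{m\in\mathcal{M}_c^k}\Delta\boldsymbol{q}_m^k.
\]
Squaring the $\ell_2$-norm and applying the Jensen/Cauchy-Schwarz inequality $\|\sum_{i=1}^{n}\boldsymbol{v}_i\|_2^2 \leqslant n\sum_{i=1}^{n}\|\boldsymbol{v}_i\|_2^2$ with $n = |\mathcal{M}_c^k|$ yields an initial bound proportional to $\frac{\alpha^2|\mathcal{M}_c^k|}{M^2}\sum_{m\in\mathcal{M}_c^k}\|\Delta\boldsymbol{q}_m^k\|_2^2$. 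The outer factor of $4$ in the target bound can then be absorbed by a Young-type looseness in a subsequent step.

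Second, I would derive a per-device bound on $\|\Delta\boldsymbol{q}_m^k\|_2^2$ of the required form. Using Definition~3 I write $\Delta\boldsymbol{q}_m^k = (\nabla f_m(\boldsymbol{\theta}^k)-\boldsymbol{q}_m^{k-1}) - \boldsymbol{\varepsilon}_m^k$, and by the triangle inequality $\|\Delta\boldsymbol{q}_m^k\|_2 \leqslant \|\nabla f_m(\boldsymbol{\theta}^k)-\boldsymbol{q}_m^{k-1}\|_2 + \|\boldsymbol{\varepsilon}_m^k\|_2$. The key quantizer property is that, because the mid-tread quantizer rounds each coordinate of the scaled pre-image to the nearest integer, the per-coordinate error satisfies $\|\boldsymbol{\varepsilon}_m^k\|_\infty \leqslant \tau_m^k R_m^k$, so $\|\boldsymbol{\varepsilon}_m^k\|_2 \leqslant \|\tau_m^k R_m^k\boldsymbol{1}\|_2$. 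Squaring and invoking the identity $(a+b)^2 = (a-b)^2 + 4ab$ already produces the $(\|\nabla f_m(\boldsymbol{\theta}^k)-\boldsymbol{q}_m^{k-1}\|_2 - \|\tau_m^k R_m^k\boldsymbol{1}\|_2)^2$ term, while the residual cross term is controlled by combining $\|\nabla f_m(\boldsymbol{\theta}^k)-\boldsymbol{q}_m^{k-1}\|_2 \leqslant \sqrt{d}\,R_m^k$ (from $R_m^k = \|\cdot\|_\infty$) with $\tau_m^k \leqslant 1$, giving the $4(R_m^k)^2 d$ term.

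Finally, I would collect the estimates: summing the per-device bound over $\mathcal{M}_c^k$ and reinserting the outer prefactor $\frac{4\alpha^2|\mathcal{M}_c^k|}{M^2}$ delivers the inequality of the lemma. The main obstacle I anticipate is not the structural argument (which is a clean decomposition + triangle inequality + algebraic identity) but the careful tracking of constants, in particular pinning down the $d/2$ summand: this appears to originate from a refined treatment of the $+\tfrac{1}{2}$ offset in the mid-tread rounding \eqref{psi_q}, which introduces an extra $\tfrac{1}{4}$ per coordinate when squaring, aggregated across all $d$ coordinates. Getting this constant exactly right, together with justifying the outer factor $4$ (rather than the tighter $1$ that Jensen alone would yield), is where the proof requires the most care.
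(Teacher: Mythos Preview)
Your approach is correct and actually yields a tighter bound than the paper's, but the route is genuinely different. The paper does not go through the quantization error $\boldsymbol{\varepsilon}_m^k$; instead it works directly with the explicit quantizer representation, writing $\Delta\boldsymbol{q}_m^k = 2\tau_m^k R_m^k\boldsymbol{\psi}_m^k - R_m^k\boldsymbol{1}$ and introducing the \emph{quantization loss} $\Delta_m^k = \boldsymbol{\psi}_m^k - \frac{\nabla f_m(\boldsymbol{\theta}^k)-\boldsymbol{q}_m^{k-1}+R_m^k\boldsymbol{1}}{2\tau_m^k R_m^k} - \frac{1}{2}\boldsymbol{1}$ with coordinates in $(-1,0]$. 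After the same Jensen step, the paper substitutes $\Delta_m^k$, splits via $\|a+b\|^2\leqslant 2\|a\|^2+2\|b\|^2$ (this is where a factor $2$ enters), uses Minkowski to get $(\,\|\nabla f_m-\boldsymbol{q}_m^{k-1}\|_2+\|\tau_m^k R_m^k\boldsymbol{1}\|_2\,)^2$, and then rewrites $(a+b)^2 = (a-b+2b)^2 \leqslant 2(a-b)^2+8b^2$ (another factor $2$), finally invoking $\tau_m^k\leqslant 1$. Your decomposition $\Delta\boldsymbol{q}_m^k=(\nabla f_m-\boldsymbol{q}_m^{k-1})-\boldsymbol{\varepsilon}_m^k$ together with $\|\boldsymbol{\varepsilon}_m^k\|_2\leqslant\|\tau_m^k R_m^k\boldsymbol{1}\|_2$ and the identity $(a+b)^2=(a-b)^2+4ab$ reaches the same structural term $(\|\nabla f_m-\boldsymbol{q}_m^{k-1}\|_2-\|\tau_m^k R_m^k\boldsymbol{1}\|_2)^2$ more directly and without either factor of $2$.

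Regarding the constants that worried you: the outer factor $4$ and the $d/2$ summand are not intrinsic to the bound but are slack accumulated by the paper's particular chain of inequalities. Your argument already proves the lemma (your bound is dominated by the stated one termwise), so you do not need to manufacture these extra constants. In fact, the paper's own proof ends with the additive term $6(R_m^k)^2 d$ rather than $4(R_m^k)^2 d + d/2$, so the discrepancy you noticed is a mismatch in the paper itself, not a gap in your reasoning.
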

\begin{proof}
To prove this Lemma, we will use the following equality and inequalities.
Suppose $n \in \mathbb{N}^+$ and $\| \cdot \|_2$ denotes the $\ell^2-$norm. For $p$ in $\mathbb{R}^{+}, \boldsymbol{x}_i, \boldsymbol{a}, \boldsymbol{b} \in \mathbb{R}^d$, there holds:
\begin{enumerate}
    \item Inner product equality.
    \begin{equation}
    \langle\boldsymbol{a}, \boldsymbol{b}\rangle=\frac{1}{2}\left(\|\boldsymbol{a}\|_2^2+\|\boldsymbol{b}\|_2^2-\|\boldsymbol{a}-\boldsymbol{b}\|_2^2\right).
    \label{inner_pro}
    \end{equation}
    \item Norm-summation inequality.
    \begin{equation}
        \left\|\sum_{i = 1}^{n} \boldsymbol{x}_i \right\|_2^2 \leqslant n \sum_{i=1}^{n} \left\| \boldsymbol{x}_i\right\|_2^2.
        \label{sum_ineq}
    \end{equation}
    \item Young's inequality.
    \begin{equation}
        \left\| \boldsymbol{a} + \boldsymbol{b} \right\|_2^2 \leqslant (1 + p)\left\| \boldsymbol{a} \right\|_2^2 + (1 + p^{-1})\left\| \boldsymbol{b} \right\|_2^2.
        \label{youngs}
    \end{equation}
    
    \item Minkowski’s inequality.
    \begin{equation}
        \left\| \boldsymbol{a} + \boldsymbol{b} \right\|_2 \leqslant \left\| \boldsymbol{a} \right\|_2 + \left\| \boldsymbol{b} \right\|_2.
        \label{Minko}
    \end{equation}
\end{enumerate}

With device selection, the aggregated model at epoch $k$ is:
\begin{equation}
\boldsymbol{\theta} ^{k+1} = \boldsymbol{\theta} ^{k}- \frac{\alpha}{M}\sum_{m \in \mathcal{M}^k }\left(\boldsymbol{q}_{m}^{k-1}+ \Delta \boldsymbol{q}_{m}^{k} \right)- \frac{\alpha}{M} \sum_{m \in \mathcal{M}_{c}^{k}}\boldsymbol{q}_{m}^{k-1}.
\label{appendix2}
\end{equation}

Suppose $\Delta_m^k$ denotes the \textbf{quantization loss} of device $m$ at epoch $k$ and $\boldsymbol{\psi}_{m}^{k}$ denotes the quantization representation of local gradient innovation as in \cref{def1}, i.e.,
\begin{equation}
\Delta_m^k=
\boldsymbol{\psi}_{m}^{k}-\frac{\nabla f_m\left(\boldsymbol{\theta}^k\right)-\boldsymbol{q}_{m}^{k-1}+R_m^k\boldsymbol{1}}{2 \tau_m^k R_m^k}-\frac{1}{2}\boldsymbol{1}
\label{def_Delta}
\end{equation}

With \eqref{appendix1}, \eqref{appendix2}, and \eqref{def_Delta}, and for mathematical simplicity we denote $\Gamma = \cfrac{\alpha^2|\mathcal{M}_c^k|}{M^2}$, the model deviation $\|\Tilde{\boldsymbol{\theta}}^k-\boldsymbol{\theta}^k\|_2^2$ caused by skipping gradients can be written as:
\begin{equation}
\scriptsize
\thinmuskip=0mu
\medmuskip=0mu
\thickmuskip=0mu
\begin{aligned}
&\phantom{=} \left\|\Tilde{\boldsymbol{\theta}}^k-\boldsymbol{\theta}^k\right\|_2^2\\
&=\bigg\|\frac{\alpha}{M}\sum_{m \in \mathcal{M}_c^{k}}\left(2 \tau_m^k R_m^k \boldsymbol{\psi}_{m}^{k}-R_m^k \boldsymbol{1}\right)\bigg\|_2^2\\
&\overset{\eqref{sum_ineq}}{\leqslant} \Gamma \sum_{m \in \mathcal{M}_c^{k}}\left\|2 \tau_m^k R_m^k \boldsymbol{\psi}_{m}^{k}-R_m^k \boldsymbol{1}\right\|_2^2\\
& \overset{\eqref{def_Delta}}{=} \Gamma \sum_{m \in \mathcal{M}_c^{k}} \left(\left\|\nabla f_m(\boldsymbol{\theta}^k)-\boldsymbol{q}_{m}^{k-1}+R_m^k \boldsymbol{1}+\tau_m^k R_m^k \boldsymbol{1}+2 \tau_m^k R_m^k\Delta_m^k - R_m^k \boldsymbol{1}\right\|_2^2\right)\\
& \overset{\eqref{sum_ineq}}{\leqslant} 2\Gamma \sum_{m \in \mathcal{M}_c^{k}} \left(\left\|\nabla f_m(\boldsymbol{\theta}^k)-\boldsymbol{q}_{m}^{k-1}+\tau_m^k R_m^k \boldsymbol{1}\right\|_2^2 + \left\|2 \tau_m^k R_m^k \Delta_m^k\right\|_2^2\right)\\
& \overset{(a)}{\leqslant} 2\Gamma \sum_{m \in \mathcal{M}_c^{k}} \left(\left\|\nabla f_m(\boldsymbol{\theta}^k)-\boldsymbol{q}_{m}^{k-1}+\tau_m^k R_m^k \boldsymbol{1}\right\|_2^2 + 4 (\tau_m^k R_m^k)^2 d\right)\\
& \overset{\eqref{Minko}}{\leqslant} 2\Gamma \sum_{m \in \mathcal{M}_c^{k}} \left(\left(\left\|\nabla f_m(\boldsymbol{\theta}^k)-\boldsymbol{q}_{m}^{k-1}\right\|_2+\left\|\tau_m^k R_m^k \boldsymbol{1}\right\|_2\right)^2 + 4 (\tau_m^k R_m^k)^2 d\right)\\
& = 2\Gamma \sum_{m \in \mathcal{M}_c^{k}} \bigg(\left(\left\|\nabla f_m(\boldsymbol{\theta}^k)-\boldsymbol{q}_{m}^{k-1}\right\|_2-\left\|\tau_m^k R_m^k \boldsymbol{1}\right\|_2+2\left\|\tau_m^k R_m^k \boldsymbol{1}\right\|_2\right)^2 + \\ 
&\phantom{ = 2\Gamma \sum_{m \in \mathcal{M}_c^{k}} \bigg((\left\|\nabla f_m(\boldsymbol{\theta}^k)-\boldsymbol{q}_{m}^{k-1}\right\|_2-\left\|\tau_m^k R_m^k \boldsymbol{1}\right\|_2}  \qquad \qquad \quad 4 (\tau_m^k R_m^k)^2 d\bigg)\\
& \leqslant 4\Gamma \sum_{m \in \mathcal{M}_c^{k}} \bigg(\left(\left\|\nabla f_m(\boldsymbol{\theta}^k)-\boldsymbol{q}_{m}^{k-1}\right\|_2-\left\|\tau_m^k R_m^k \boldsymbol{1}\right\|_2\right)^2+4\left\|\tau_m^k R_m^k \boldsymbol{1}\right\|_2^2 + \\ 
&\phantom{ = 2\Gamma \sum_{m \in \mathcal{M}_c^{k}} \bigg((\left\|\nabla f_m(\boldsymbol{\theta}^k)-\boldsymbol{q}_{m}^{k-1}\right\|_2-\left\|\tau_m^k R_m^k \boldsymbol{1}\right\|_2}  \qquad \qquad \quad 2 (\tau_m^k R_m^k)^2 d\bigg)\\
& \overset{(b)}{\leqslant} 4\Gamma \sum_{m \in \mathcal{M}_c^{k}} \left(\left(\left\|\nabla f_m(\boldsymbol{\theta}^k)-\boldsymbol{q}_{m}^{k-1}\right\|_2-\left\|\tau_m^k R_m^k \boldsymbol{1}\right\|_2\right)^2+6 (R_m^k)^2 d\right),
\label{model_deviation}
\end{aligned}
\end{equation}
where $\boldsymbol{1} \in \mathbb{R}^d$ denotes the vector filled with scalar value $1$, (a) $[\Delta_m^k]_i \in (-1, 0]$, (b) $R_m^k \geqslant \tau_m^k R_m^k \geqslant 0$.
\end{proof}
Corresponding to \cref{lemma3.3}, since $R_m^k$ is independent of $\tau_m^k$, we can formulate an optimization problem to minimize the upper bound of this model deviation caused by update skipping for each device $m$:
\begin{equation}
   \begin{array}{cl}
\underset{0<\tau_m^k \leqslant 1}{\text{minimize}} & \quad \left(\big\|\nabla f_m(\boldsymbol{\theta}^k)-\boldsymbol{q}_{m}^{k-1}\big\|_2-\left\|\tau_m^k R_m^k \boldsymbol{1}\right\|_2\right)^2 \\
\text { subject to } & \quad \tau_m^k=\cfrac{1}{\left(2^{b_m^k}-1\right)}
\label{optimization_fun}
\end{array}.
\end{equation}

\begin{theorem}
    Solving the optimization problem \eqref{optimization_fun} gives AQUILA an adaptive strategy:
\begin{equation}
(b_m^k)^* = \left\lfloor \log_2 \left(\frac{ R_m^k\sqrt{d}}{\left\|\nabla f_m(\boldsymbol{\theta}^k)-\boldsymbol{q}_{m}^{k-1}\right\|_2} + 1 \right)\right\rfloor,
\label{b_m^k}
\end{equation}
which selects the optimal quantization level based on the quantization range $R_m^k$, the dimension $d$ of the local model, the current gradient $\nabla f_m(\boldsymbol{\theta}^k)$, and the last uploaded quantized gradient $\boldsymbol{q}_{m}^{k-1}$.
\end{theorem}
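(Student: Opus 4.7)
The plan is to reduce \eqref{optimization_fun} to a one-dimensional problem in the single scalar $\tau_m^k$, solve it in closed form over the reals, and then invert the relation $\tau_m^k = 1/(2^{b_m^k}-1)$ to recover $b_m^k$. First I would simplify the second term in the objective. Because $\boldsymbol{1}\in\mathbb{R}^d$ is the all-ones vector and $\tau_m^k R_m^k \geqslant 0$, we have the clean identity
\begin{equation*}
\left\|\tau_m^k R_m^k \boldsymbol{1}\right\|_2 \;=\; \tau_m^k R_m^k \sqrt{d}.
\end{equation*}
Writing $A := \|\nabla f_m(\boldsymbol{\theta}^k)-\boldsymbol{q}_m^{k-1}\|_2$ for brevity, the objective becomes $\bigl(A - \tau_m^k R_m^k\sqrt{d}\bigr)^2$, a nonnegative quadratic in the scalar $\tau_m^k$.

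Next I would observe that this quadratic is uniquely minimized, with value $0$, at the point
\begin{equation*}
(\tau_m^k)^{\,\star} \;=\; \frac{A}{R_m^k\sqrt{d}} \;=\; \frac{\left\|\nabla f_m(\boldsymbol{\theta}^k)-\boldsymbol{q}_m^{k-1}\right\|_2}{R_m^k\sqrt{d}}.
\end{equation*}
A brief feasibility check is required: by definition of the quantization range $R_m^k$, one has $A \leqslant R_m^k \sqrt{d}$ (since $\|\cdot\|_2 \leqslant \sqrt{d}\,\|\cdot\|_\infty$), so $(\tau_m^k)^\star \in (0,1]$, which lies in the feasible region of \eqref{optimization_fun}.

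Having solved the problem in $\tau_m^k$, I would then invert the constraint $\tau_m^k = 1/(2^{b_m^k}-1)$. Setting $1/(2^{b_m^k}-1) = A/(R_m^k\sqrt{d})$ and solving algebraically yields the real-valued solution
\begin{equation*}
b_m^k \;=\; \log_2\!\left(\frac{R_m^k\sqrt{d}}{\left\|\nabla f_m(\boldsymbol{\theta}^k)-\boldsymbol{q}_m^{k-1}\right\|_2} + 1\right),
\end{equation*}
matching \eqref{b_m^k} up to the outer floor.

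The main obstacle, and the only step that is not pure algebra, is justifying the floor in the final formula. Since the bit-budget must be a nonnegative integer, one must round the real-valued optimum; I would argue that the floor is the natural rounding in this setting because (i) the constraint $\tau_m^k \leqslant 1$ already precludes values below $b_m^k = 1$, and (ii) a smaller quantization level reduces communication cost, so among integers giving comparable objective values the floor is preferred. I would remark that either rounding keeps the objective close to its minimum, but floor gives the communication-cheapest integer $b_m^k$ whose induced $\tau_m^k$ over-covers the target $(\tau_m^k)^\star$ (i.e. $\tau_m^k R_m^k \sqrt{d} \geqslant A$), which is the conservative direction consistent with the model-deviation bound derived in \cref{lemma3.3}.
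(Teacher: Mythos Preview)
Your proposal is correct and follows essentially the same route as the paper: minimize the quadratic in $\tau_m^k$ to obtain $(\tau_m^k)^\star=\|\nabla f_m(\boldsymbol{\theta}^k)-\boldsymbol{q}_m^{k-1}\|_2/(R_m^k\sqrt{d})$, then invert $\tau_m^k=1/(2^{b_m^k}-1)$ and take the floor. In fact you supply more than the paper does, since the paper neither spells out the feasibility check $(\tau_m^k)^\star\leqslant 1$ via $\|\cdot\|_2\leqslant\sqrt{d}\,\|\cdot\|_\infty$ nor offers any justification for choosing the floor over other roundings.
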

\begin{proof}
Since $R_m^k$ is independent of $\tau_m^k$, we can formulate an optimization problem about $\tau_m^k$ for device $m$ at communication round $k$ as \eqref{optimization_fun}. Therefore, the optimal solution of $\tau_m^k$ is
\begin{equation}
(\tau_m^k)^*= \cfrac{\big\|\nabla f_m(\boldsymbol{\theta}^k)-\boldsymbol{q}_{m}^{k-1}\big\|_2}{R_m^k \sqrt{d}}.
\label{tau_m^k}
\end{equation}
Then, the optimal adaptive quantization level $(b_m^k)^*$ is equal to
\begin{equation}
\begin{aligned}
(b_m^k)^* &= \left\lfloor \log_2 (\frac{1}{(\tau_m^k)^*} + 1) \right\rfloor \\
&= \bigg\lfloor \log_2 \bigg(\frac{ R_m^k\sqrt{d}}{\big\|\nabla f_m(\boldsymbol{\theta}^k)-\boldsymbol{q}_{m}^{k-1}\big\|_2} + 1 \bigg)\bigg\rfloor.
\end{aligned}
\end{equation}
\end{proof}
\begin{remark}
Unlike certain adaptive quantization algorithms, such as DAdaQuant \cite{honig2022dadaquant}, which necessitate a maximization operation of the computed quantization level results (e.g., $b_m^k=\max (1, \operatorname{round}(\sqrt{\frac{a}{b}} \times w_i^{2 / 3}))$), AQUILA's method of determining the optimal quantization level is self-consistent, because $(b_m^k)^* \geqslant 1 $ is always true since $(\tau_m^k)^*\leqslant 1$.
\end{remark}

\begin{center} 
	\begin{minipage}{1\linewidth}
		\renewcommand{\algorithmicrequire}{\textbf{Input:}}
		\renewcommand{\algorithmicensure}{\textbf{Initialize:}}
		\begin{algorithm}[H]		
            \label{AQUILA_algoirthmm}
		\caption{Communication Efficient FL with AQUILA}
			\begin{algorithmic}[1]
				\Require the number of communication rounds $K$, the learning rate $\alpha$.
				\Ensure the initial global model parameter $\boldsymbol{\theta} ^{0}$.
				\State Server broadcasts $\boldsymbol{\theta} ^{0}$ to all devices. 
				\For {each device $m\in\mathcal{M}$ \textbf{in parallel}} 
				 \State Calculates local gradient ${\nabla f_m(\boldsymbol{\theta}}^{0})$.
				 \State Compute $(b_m^0)^{*}$ by setting $\boldsymbol{q}_m^{k-1} = \boldsymbol{0}$ in \eqref{b_m^k} and the quantized gradient innovation $\Delta \boldsymbol{q}_{m}^{0}$, and transmits it back to the server side.
                \EndFor
				\For {$k = 1,2,...,K$}
				\State Server broadcasts $\boldsymbol{\theta} ^{k}$ to all devices. 
				\For {each device $m\in\mathcal{M}$ \textbf{in parallel}}

                \State Calculates local gradient ${\nabla f_m(\boldsymbol{\theta}}^{k})$, the optimal local quantization level $(b_{m}^{k})^*$ by \eqref{b_m^k}, and the quantized gradient innovation $\Delta \boldsymbol{q}_{m}^{k}$. 
				
				\If {\eqref{skip_rule} does not hold for device $m$} 
				\State device $m$ transmits $\Delta \boldsymbol{q}_{m}^{k}$ to the server.
				\EndIf
				
				\EndFor
				\State Server updates $\boldsymbol{\theta} ^{k+1}$ by the saving previous global quantized gradient $\boldsymbol{q}_m^{k - 1}$ and the received quantized gradient innovation $\Delta \boldsymbol{q}_m^k$: $\boldsymbol{\theta} ^{k+1}:= \boldsymbol{\theta} ^{k}-\alpha \left( \boldsymbol{q}^{k - 1} + 1 / M\sum_{m \in \mathcal{M}^k} \Delta \boldsymbol{q}_m^k \right)$.
				\State Server saves the average quantized gradient $\boldsymbol{q}^k$ for the next aggregation.
				\EndFor
			\end{algorithmic}
		\end{algorithm}
	\end{minipage}
\end{center}

The superiority of \eqref{b_m^k} comes from the following three aspects. First, since $R_m^k \geqslant [\nabla f_m(\boldsymbol{\theta}^k)]_i - [\boldsymbol{q}_{m}^{k-1}]_i \geqslant - R_m^k$, the optimal quantization level $(b_m^k)^*$ must be greater than or equal to $1$. 
Second, AQUILA can personalize an optimal quantization level for each device corresponding to its own gradient, whereas, in AdaQuantFL, each device merely utilizes an identical quantization level according to the global loss.
Third, the gradient innovation and quantization range $R_m^k$ tend to fluctuate along with the training process instead of keeping descending, and thus prevent the quantization level from increasing tremendously compared with AdaQuantFL. 



The detailed process of AQUILA is comprehensively summarized in \Cref{AQUILA_algoirthmm}. At epoch $k=0$, each device calculates $b_m^0$ by setting $\boldsymbol{q}_0^{k - 1} = \boldsymbol{0}$ and uploads $\Delta \boldsymbol{q}_0^k$ to the server since the \eqref{skip_rule} is not satisfied. At epoch $k \in \left\{1, 2, ..., K\right\}$, the server first broadcasts the global model $\boldsymbol{\boldsymbol{\theta}} ^{k}$ to all devices. Each device $m$ computes ${\nabla f (\boldsymbol{\theta}}_{m}^{k})$ with local training data and then utilizes it to calculate an optimal quantization level by \eqref{b_m^k}. Subsequently, each device computes its gradient innovation after quantization and determines whether or not to upload based on the communication criterion \eqref{skip_rule}. Finally, the server updates the new global model $\boldsymbol{\boldsymbol{\theta}} ^{k+1}$ with up-to-date quantized gradients $\boldsymbol{q}_{m}^{k-1} + \Delta \boldsymbol{q}_{m}^{k}$ for those devices who transmit the uploads at epoch $k$, while reusing the old quantized gradients $\boldsymbol{q}_{m}^{k-1}$ for those who skip the uploads.

\section{Theoretical Derivation and Analysis of AQUILA}
\label{development}
As aforementioned, we bound the model deviation caused by skipping updates with respect to quantization bits. Specifically, if the communication criterion \eqref{skip_rule} holds for the device $m$ at epoch $k$, it does not contribute to epoch $k$'s gradient. Otherwise, the loss caused by device $m$ will be minimized with the optimal quantization level selection criterion \eqref{b_m^k}. In this section, the theoretical convergence derivation of AQUILA is based on the following standard assumptions.

\begin{assumption}
\textit{
Each local objective function $f_{m}$ is $L_{m}$-smooth, i.e., there exist a constant $L_m > 0$, such that $\forall \boldsymbol{x}, \boldsymbol{y} \in \mathbb{R}^d$,}
\begin{equation}
\left\|\nabla f_m(\mathbf{x})-\nabla f_m(\mathbf{y})\right\|_2\leqslant L_m\left\| \mathbf{x}-\mathbf{y} \right\|_2,
\end{equation}
which implies that the global objective function $f$ is $L$-smooth with $L \leq \Bar{L} = \frac{1}{m}\sum_{i = 1}^m L_m$.
\end{assumption}

\begin{assumption}(Uniform lower bound).
    \textit{For all $\boldsymbol{x} \in \mathbb{R}^{d}$, there exist $f^{*} \in \mathbb{R}$ such that $f(\boldsymbol{x}) \geq f^*$.}
\end{assumption}

\begin{lemma}
\textit{Following the assumption that the function $f$ is L-smooth, we have}
\begin{equation}
\begin{aligned}
f(\boldsymbol{\theta}^{k+1})-&f(\boldsymbol{\theta}^k)\leqslant-\frac{\alpha}{2}\left\|\nabla f(\boldsymbol{\theta}^k)\right\|_2^2
\\
&+\alpha \left(\bigg\|\frac{1}{M} \sum_{m \in \mathcal{M}_c^k} \Delta \boldsymbol{q}_{m}^{k} \bigg\|_2^2 + \left\|\boldsymbol{\varepsilon}^k\right\|_2^2\right)
\\
&+\left(\frac{L}{2}-\frac{1}{2\alpha}\right)\left\|\boldsymbol{\theta}^{k+1}-\boldsymbol{\theta}^k\right\|_2^2.
\label{delta_aquila}
\end{aligned}
\end{equation}
\end{lemma}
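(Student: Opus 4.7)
The plan is to apply the $L$-smoothness descent inequality, rewrite the cross term via the polarization identity \eqref{inner_pro} so that the two negative squared norms in \eqref{delta_aquila} appear directly, and then use the definition \eqref{def_quanti_error} of the quantization error to collapse the remaining positive squared norm into a quantity controlled by $\|\boldsymbol{\varepsilon}^k\|_2^2$ and $\|\tfrac{1}{M}\sum_{m\in\mathcal{M}_c^k}\Delta\boldsymbol{q}_m^k\|_2^2$.

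Concretely, I would start from $L$-smoothness of $f$ to obtain
\begin{equation*}
f(\boldsymbol{\theta}^{k+1})-f(\boldsymbol{\theta}^k)\leqslant\langle\nabla f(\boldsymbol{\theta}^k),\boldsymbol{\theta}^{k+1}-\boldsymbol{\theta}^k\rangle+\frac{L}{2}\|\boldsymbol{\theta}^{k+1}-\boldsymbol{\theta}^k\|_2^2,
\end{equation*}
and then apply \eqref{inner_pro} to the rescaled pair $\sqrt{\alpha}\,\nabla f(\boldsymbol{\theta}^k)$ and $\tfrac{1}{\sqrt{\alpha}}(\boldsymbol{\theta}^{k+1}-\boldsymbol{\theta}^k)$. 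This recasts the inner product as $\tfrac{1}{2\alpha}\|\alpha\nabla f(\boldsymbol{\theta}^k)+(\boldsymbol{\theta}^{k+1}-\boldsymbol{\theta}^k)\|_2^2-\tfrac{\alpha}{2}\|\nabla f(\boldsymbol{\theta}^k)\|_2^2-\tfrac{1}{2\alpha}\|\boldsymbol{\theta}^{k+1}-\boldsymbol{\theta}^k\|_2^2$, which already delivers two of the four target terms and isolates a single positive squared norm to handle.

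The heart of the proof is simplifying that residual $\alpha\nabla f(\boldsymbol{\theta}^k)+(\boldsymbol{\theta}^{k+1}-\boldsymbol{\theta}^k)$. Starting from \eqref{lazy_aggre} and applying the split $\sum_{m\in\mathcal{M}^k}\Delta\boldsymbol{q}_m^k=\sum_{m\in\mathcal{M}}\Delta\boldsymbol{q}_m^k-\sum_{m\in\mathcal{M}_c^k}\Delta\boldsymbol{q}_m^k$, the update rewrites as $\boldsymbol{\theta}^{k+1}-\boldsymbol{\theta}^k=-\alpha(\boldsymbol{q}^{k-1}+\Delta\boldsymbol{q}^k)+\tfrac{\alpha}{M}\sum_{m\in\mathcal{M}_c^k}\Delta\boldsymbol{q}_m^k$; substituting $\boldsymbol{q}^{k-1}+\Delta\boldsymbol{q}^k=\nabla f(\boldsymbol{\theta}^k)-\boldsymbol{\varepsilon}^k$ from \eqref{def_quanti_error} then makes $\nabla f(\boldsymbol{\theta}^k)$ cancel, leaving the residual equal to $\alpha\boldsymbol{\varepsilon}^k+\tfrac{\alpha}{M}\sum_{m\in\mathcal{M}_c^k}\Delta\boldsymbol{q}_m^k$. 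The norm-summation inequality \eqref{sum_ineq} with $n=2$ bounds its squared norm by $2\alpha^2(\|\boldsymbol{\varepsilon}^k\|_2^2+\|\tfrac{1}{M}\sum_{m\in\mathcal{M}_c^k}\Delta\boldsymbol{q}_m^k\|_2^2)$, and multiplication by the $\tfrac{1}{2\alpha}$ prefactor reproduces exactly the $\alpha(\cdot)$ block of \eqref{delta_aquila}.

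I expect the only real obstacle to be bookkeeping the normalization of the aggregate quantities: \eqref{lazy_aggre} carries an explicit $1/M$ while \eqref{def_quanti_error} writes $\boldsymbol{q}^{k-1}$, $\Delta\boldsymbol{q}^k$, and $\nabla f(\boldsymbol{\theta}^k)$ as raw sums over $\mathcal{M}$, so all three aggregates must be placed on the same $\tfrac{1}{M}$-averaged footing for the cancellation of $\nabla f(\boldsymbol{\theta}^k)$ in the residual to be dimensionally clean. Once the scaling is pinned down, the argument reduces to one use of polarization and one invocation of \eqref{sum_ineq}.
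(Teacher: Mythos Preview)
Your proposal is correct and essentially mirrors the paper's own argument (embedded in the proof of \cref{theorem1}, equation \eqref{lemma4_1}): the paper first substitutes $\boldsymbol{\theta}^{k+1}-\boldsymbol{\theta}^k=-\alpha(\nabla f(\boldsymbol{\theta}^k)-\boldsymbol{\varepsilon}^k-\Phi^k)$ into the inner product and then applies \eqref{inner_pro} to $\langle\nabla f(\boldsymbol{\theta}^k),\boldsymbol{\varepsilon}^k+\Phi^k\rangle$, whereas you apply polarization to $\langle\nabla f(\boldsymbol{\theta}^k),\boldsymbol{\theta}^{k+1}-\boldsymbol{\theta}^k\rangle$ first and then substitute the update rule into the residual; the two orderings produce the identical intermediate $\tfrac{\alpha}{2}\|\boldsymbol{\varepsilon}^k+\Phi^k\|_2^2$ before the final use of \eqref{sum_ineq}. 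Your caveat about the $1/M$ normalization in \eqref{def_quanti_error} versus \eqref{lazy_aggre} is well taken and is exactly the bookkeeping the paper quietly absorbs into the shorthand $\Phi^k=\tfrac{1}{M}\sum_{m\in\mathcal{M}_c^k}\Delta\boldsymbol{q}_m^k$.
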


\begin{assumption}
    \label{assumption3}
    \textit{All devices' quantization errors $\boldsymbol{\varepsilon}^k$ will be constrained by the total error of the omitted devices., i.e., $\forall \ k = 0, 1, \cdots, K$, if $\mathcal{M}_c^k \neq \varnothing$, $\exists \ \gamma \geqslant 1$, such that}
\begin{equation}
\left\|\boldsymbol{\varepsilon}^k\right\|_2^2 = \bigg\|\frac{1}{M} \sum_{m \in \mathcal{M}} \boldsymbol{\varepsilon}_m^k\bigg\|_2^2 \leqslant \frac{\gamma }{M^2}\bigg\|\sum_{m \in \mathcal{M}_c^k} \boldsymbol{\varepsilon}_m^k\bigg\|_2^2,
\label{gamma_rule}
\end{equation}
where K denotes the termination time, and $\boldsymbol{\varepsilon}_m^k = \nabla f_m(\boldsymbol{\theta}^{k}) -\left(\boldsymbol{q}_{m}^{k-1} + \Delta \boldsymbol{q}_{m}^{k}\right)$.
\end{assumption}
This assumption is easy to verify when $\mathcal{M}_c^k \neq \varnothing$, a bounded variable (here is $\boldsymbol{\varepsilon}^k$) will always be bounded by a part of itself ($\frac{1}{M}\sum_{m \in \mathcal{M}_c^k} \boldsymbol{\varepsilon}_m^k$) multiplied by a real number ($\gamma$). Note that there is another nontrivial scenario that $\mathcal{M}_c^k \neq \varnothing$ but $\boldsymbol{\varepsilon}_m^k = 0$ for all $m \in \mathcal{M}_c^k$, which implies that $\gamma = 0$ or not exists and conflicts with our assumption. However, this situation only happens when all entries of $\boldsymbol{\varepsilon}_m^k = 0$, i.e., 
$[\nabla f_m(\boldsymbol{\theta}^k)]_i =[\boldsymbol{q}_{m}^{k-1}]_i$ for all $0 \leqslant i \leqslant d$.

\begin{lemma}
\label{Lemma3}
    \textit{The summation of quantized gradient innovation and quantization error is bounded by the global model difference:
\begin{equation}
    \bigg\|\frac{1}{M}\sum_{m \in \mathcal{M}_c^k} \Delta \boldsymbol{q}_{m}^{k}\bigg\|_2^2+\left\| \boldsymbol{\varepsilon}^k \right\|_2^2 \leqslant \frac{\beta\gamma}{\alpha^2}\left\|\boldsymbol{\theta}^{k}-\boldsymbol{\theta}^{k-1}\right\|_2^2,
    \label{error_phi}
\end{equation}}\end{lemma}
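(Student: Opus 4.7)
The plan is to combine three ingredients: the norm-summation inequality \eqref{sum_ineq} (which bounds the norm of a sum by the sum of norms scaled by the cardinality), Assumption \ref{assumption3} (which bounds the global quantization error by the errors of the skipping devices), and the device-selection criterion \eqref{skip_rule} (which is exactly the inequality each $m \in \mathcal{M}_c^k$ satisfies by definition). The strategy is to rewrite the left-hand side of \eqref{error_phi} so that every term becomes a summation over $\mathcal{M}_c^k$ of a quantity that the skip rule controls directly.

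First, I would bound the first term by applying \eqref{sum_ineq} to pull the $1/M$ outside and turn the squared sum over $\mathcal{M}_c^k$ into a sum of squares times $|\mathcal{M}_c^k|$, yielding
\[
\bigg\|\frac{1}{M}\sum_{m \in \mathcal{M}_c^k} \Delta \boldsymbol{q}_{m}^{k}\bigg\|_2^2 \leqslant \frac{|\mathcal{M}_c^k|}{M^2}\sum_{m \in \mathcal{M}_c^k}\left\|\Delta \boldsymbol{q}_{m}^{k}\right\|_2^2.
\]
For the second term, I would first invoke Assumption \ref{assumption3} to replace $\|\boldsymbol{\varepsilon}^k\|_2^2$ with $(\gamma/M^2)\|\sum_{m\in\mathcal{M}_c^k}\boldsymbol{\varepsilon}_m^k\|_2^2$, then apply \eqref{sum_ineq} again to obtain
\[
\left\|\boldsymbol{\varepsilon}^k\right\|_2^2 \leqslant \frac{\gamma\,|\mathcal{M}_c^k|}{M^2}\sum_{m \in \mathcal{M}_c^k}\left\|\boldsymbol{\varepsilon}_m^k\right\|_2^2.
\]
Since $\gamma \geqslant 1$, the prefactor for the $\Delta\boldsymbol{q}$ summation can be harmlessly inflated from $|\mathcal{M}_c^k|/M^2$ to $\gamma|\mathcal{M}_c^k|/M^2$, which lets me combine both bounds into a single sum over $\mathcal{M}_c^k$ of $\|\Delta\boldsymbol{q}_m^k\|_2^2 + \|\boldsymbol{\varepsilon}_m^k\|_2^2$.

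Finally, because every $m \in \mathcal{M}_c^k$ satisfies the skipping criterion \eqref{skip_rule}, each term in that sum is upper bounded by $(\beta/\alpha^2)\|\boldsymbol{\theta}^k-\boldsymbol{\theta}^{k-1}\|_2^2$. Summing yields a factor of $|\mathcal{M}_c^k|$, and using the trivial bound $|\mathcal{M}_c^k|^2 \leqslant M^2$ cancels the $1/M^2$, producing the desired $(\beta\gamma/\alpha^2)\|\boldsymbol{\theta}^k-\boldsymbol{\theta}^{k-1}\|_2^2$.

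There is not really a hard step here: the only subtle point is making sure the $\gamma \geqslant 1$ clause in Assumption \ref{assumption3} is used to unify the coefficients of the two summations before invoking the skip rule, and handling separately the degenerate case $\mathcal{M}_c^k = \varnothing$ (in which the left-hand side is zero and the inequality is trivial, so Assumption \ref{assumption3} need not apply). The rest is a mechanical chaining of \eqref{sum_ineq}, \eqref{gamma_rule}, and \eqref{skip_rule}.
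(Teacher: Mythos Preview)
Your proposal is correct and follows essentially the same chain of inequalities as the paper's proof: apply Assumption~\ref{assumption3}, use \eqref{sum_ineq} on both sums, inflate the $\Delta\boldsymbol{q}$ coefficient via $\gamma\geqslant 1$, invoke the skip rule \eqref{skip_rule} termwise, and finish with $|\mathcal{M}_c^k|^2\leqslant M^2$. One small inaccuracy in your side remark: when $\mathcal{M}_c^k=\varnothing$ the left-hand side is \emph{not} zero, since $\|\boldsymbol{\varepsilon}^k\|_2^2$ is the global quantization error and need not vanish; the paper simply does not treat that case within this lemma (it is handled separately downstream).
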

\begin{proof}\begin{equation}\begin{aligned}
&\bigg\|\frac{1}{M}\sum_{m \in \mathcal{M}_c^k} \Delta \boldsymbol{q}_{m}^{k}\bigg\|_2^2+\left\| \boldsymbol{\varepsilon}^k \right\|_2^2\\
\overset{(a)}{\leqslant} &\bigg\|\frac{1}{M}\sum_{m \in \mathcal{M}_c^k} \Delta \boldsymbol{q}_{m}^{k}\bigg\|_2^2+\gamma\bigg\|\frac{1}{M}\sum_{m \in \mathcal{M}_c^k} \boldsymbol{\varepsilon}_m^k\bigg\|_2^2 \\
\overset{\eqref{sum_ineq}}{\leqslant} &|\mathcal{M}_c^k|\sum_{m \in \mathcal{M}_c^k}\bigg\|\frac{1}{M} \Delta \boldsymbol{q}_{m}^{k}\bigg\|_2^2+\gamma |\mathcal{M}_c^k| \sum_{m \in \mathcal{M}_c^k}\bigg\| \frac{1}{M}\boldsymbol{\varepsilon}_m^k\bigg\|_2^2 \\
{=} &\frac{|\mathcal{M}_c^k| }{M^2}\sum_{m \in \mathcal{M}_c^k}\left(\left\|\Delta \boldsymbol{q}_{m}^{k}\right\|_2^2+\gamma\left\|\boldsymbol{\varepsilon}_m^k\right\|_2^2\right)\\
\overset{(b)}{\leqslant} & \frac{|\mathcal{M}_c^k| }{M^2}\sum_{m \in \mathcal{M}_c^k}\left(\gamma \left\|\Delta \boldsymbol{q}_{m}^{k}\right\|_2^2+\gamma\left\|\boldsymbol{\varepsilon}_m^k\right\|_2^2\right)\\
\overset{(c)}{\leqslant} & \frac{\beta\gamma|\mathcal{M}_c^k|^2}{\alpha^2 M^2}\left\|\boldsymbol{\theta}^{k}-\boldsymbol{\theta}^{k-1}\right\|_2^2 \\
\leqslant & \frac{\beta\gamma}{\alpha^2}\left\|\boldsymbol{\theta}^{k}-\boldsymbol{\theta}^{k-1}\right\|_2^2,
\end{aligned}
\end{equation}
where (a) follows \cref{assumption3}, (b) follows $\gamma$ is larger than 1 by definition, and (c) uses our novel trigger condition \eqref{skip_rule}.
\end{proof}
\begin{lemma}
\label{lemma4}
From \cref{def1}, we can derive that the relationship between \textbf{quantized} gradient innovation $\Delta \boldsymbol{q}_m^k$ and its quantization representation $\boldsymbol{\psi}_{m}^{k}$ which applies $b_m^k$ bits for each dimension:
\begin{equation}
    \Delta \boldsymbol{q}_m^k = 2 \tau_m^k R_m^k \boldsymbol{\psi}_m^k - R_m^k \boldsymbol{1},
    \label{delta_qmk}
\end{equation}
where $\boldsymbol{1} \in \mathbb{R}^d$ denotes a vector filled with scalar value $1$. 
\end{lemma}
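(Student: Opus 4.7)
The plan is to unwind the definition of the mid-tread quantizer in equation (5) and observe that $\Delta \boldsymbol{q}_m^k$, as defined in Definition 1, is simply the dequantized reconstruction associated with the integer code $\boldsymbol{\psi}_m^k$. Since $\Delta \boldsymbol{q}_m^k = \mathcal{Q}(\nabla f_m(\boldsymbol{\theta}^k) - \boldsymbol{q}_m^{k-1})$, the whole content of the lemma is a bookkeeping identity between encoding and decoding.

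First, I would read off from equation (5) the three operations the mid-tread quantizer performs on the gradient innovation $\nabla f_m(\boldsymbol{\theta}^k) - \boldsymbol{q}_m^{k-1}$: (i) a shift by $+R_m^k$ so that the range $[-R_m^k, R_m^k]$ becomes $[0, 2R_m^k]$; (ii) a rescaling by the step-size $2\tau_m^k R_m^k$ so that each admissible value lands in $[0, 1/\tau_m^k] = [0, 2^{b_m^k}-1]$; and (iii) rounding via $\lfloor \cdot + 1/2 \rfloor$ to get the integer code $[\boldsymbol{\psi}_m^k]_i$. The output of the quantizer $\mathcal{Q}$ is the reconstructed value, which simply inverts steps (i) and (ii) on this integer: multiply by the step-size and subtract $R_m^k$. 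Coordinate-wise this is $[\Delta \boldsymbol{q}_m^k]_i = 2 \tau_m^k R_m^k [\boldsymbol{\psi}_m^k]_i - R_m^k$, and stacking across $i$ gives the claimed vector identity.

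As a sanity check, I would cross-reference with the quantization-loss variable $\Delta_m^k$ introduced in equation (15) of the proof of Lemma 3. Rearranging that definition yields
\begin{equation*}
2\tau_m^k R_m^k \boldsymbol{\psi}_m^k - R_m^k \boldsymbol{1} = \bigl(\nabla f_m(\boldsymbol{\theta}^k) - \boldsymbol{q}_m^{k-1}\bigr) + \tau_m^k R_m^k \boldsymbol{1} + 2\tau_m^k R_m^k \Delta_m^k,
\end{equation*}
and the right-hand side is exactly the mid-tread reconstruction of $\nabla f_m(\boldsymbol{\theta}^k) - \boldsymbol{q}_m^{k-1}$, namely $\mathcal{Q}(\nabla f_m(\boldsymbol{\theta}^k) - \boldsymbol{q}_m^{k-1}) = \Delta \boldsymbol{q}_m^k$. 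This second route confirms the identity and also makes explicit the consistency with the computation already used in the proof of Lemma 3.

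There is no substantive obstacle here: the statement is a tautological restatement of the encode/decode relationship of a mid-tread quantizer, and the proof amounts to careful tracking of the shift-and-scale constants $R_m^k$ and $2\tau_m^k R_m^k$. The only place where a misstep is possible is confusing the integer code $\boldsymbol{\psi}_m^k$ with the reconstructed value $\Delta \boldsymbol{q}_m^k$, which differ precisely by the affine map asserted in the lemma.
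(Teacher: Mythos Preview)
Your proposal is correct and matches the paper's own approach: the paper's proof is a single sentence stating that the lemma follows directly from the definition of the deterministic mid-tread quantizer (equation~\eqref{psi_q}), and your write-up simply unpacks that definition in detail. Your additional cross-check via the quantization-loss variable $\Delta_m^k$ from the proof of Lemma~\ref{lemma3.3} is a nice consistency verification but is not needed for the argument itself.
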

\begin{proof}
    This Lemma can easily be derived by the definition of the deterministic mid-tread quantizer \ref{psi_q}. 
\end{proof}

\begin{remark}
    We can utilize \eqref{delta_qmk} to calculate the quantized gradient innovation in the experimental implementation.
\end{remark}

\subsection{Convergence analysis for the generally non-convex case.} 

\begin{theorem}
\label{theorem1}
Suppose Assumptions 1, 2, and 3 be satisfied. If $\mathcal{M}_c^k \neq \varnothing$, the global objective function $f$ satisfies
\begin{multline}
f(\boldsymbol{\theta}^{k+1})-f(\boldsymbol{\theta}^k)
\overset{}{\leqslant} -\frac{\alpha}{2}\left\|\nabla f(\boldsymbol{\theta}^k)\right\|_2^2  + \left( \frac{L}{2} - \frac{1}{2\alpha} \right) \\\Big\|\boldsymbol{\theta}^{k+1} - \boldsymbol{\theta}^{k} \Big\|_2^2
+ \frac{\beta\gamma}{\alpha} \left\|\boldsymbol{\theta}^{k} - \boldsymbol{\theta}^{k-1}\right\|_2^2,
\label{non_cvx}
\end{multline}
\end{theorem}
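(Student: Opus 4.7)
The plan is to observe that Theorem \ref{theorem1} is essentially a one-line consequence of stitching together the unnamed descent lemma (equation \eqref{delta_aquila}) with Lemma \ref{Lemma3}, both already established earlier in the excerpt. So the proposed proof is a short combination step rather than a new derivation.

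First I would write down the conclusion of the descent lemma \eqref{delta_aquila} verbatim as the starting inequality. This already gives the two terms $-\tfrac{\alpha}{2}\|\nabla f(\boldsymbol{\theta}^k)\|_2^2$ and $(L/2 - 1/(2\alpha))\|\boldsymbol{\theta}^{k+1}-\boldsymbol{\theta}^k\|_2^2$ that appear in the target bound \eqref{non_cvx}, and it also produces the residual term $\alpha\bigl(\|\tfrac{1}{M}\sum_{m\in\mathcal{M}_c^k}\Delta\boldsymbol{q}_m^k\|_2^2 + \|\boldsymbol{\varepsilon}^k\|_2^2\bigr)$. The second step is simply to invoke Lemma \ref{Lemma3}, whose hypothesis $\mathcal{M}_c^k\neq\varnothing$ is exactly what Theorem \ref{theorem1} assumes (this is needed so that Assumption \ref{assumption3} provides the constant $\gamma$ used inside Lemma \ref{Lemma3}). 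Plugging the bound $\tfrac{\beta\gamma}{\alpha^2}\|\boldsymbol{\theta}^k-\boldsymbol{\theta}^{k-1}\|_2^2$ in place of the bracketed quantity, the leading factor $\alpha$ cancels one power of $1/\alpha$ and yields precisely the $\tfrac{\beta\gamma}{\alpha}\|\boldsymbol{\theta}^k-\boldsymbol{\theta}^{k-1}\|_2^2$ term in \eqref{non_cvx}.

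There is really no significant obstacle: both inputs are already proved, the constants match without any hidden scaling, and the $\mathcal{M}_c^k\neq\varnothing$ hypothesis is exactly what is needed to legally invoke Assumption \ref{assumption3} inside Lemma \ref{Lemma3}. The only minor care point is to explicitly note that when $\mathcal{M}_c^k = \varnothing$ the residual term in \eqref{delta_aquila} vanishes and the conclusion holds trivially (even with the $\tfrac{\beta\gamma}{\alpha}\|\boldsymbol{\theta}^k-\boldsymbol{\theta}^{k-1}\|_2^2$ term being nonnegative), so the statement is in fact true without the $\mathcal{M}_c^k\neq\varnothing$ restriction; the restriction is stated in the theorem purely because Lemma \ref{Lemma3} is phrased under it. I would mention this as a brief remark after the one-line derivation so the reader understands why the theorem is globally applicable inside the convergence argument that follows.
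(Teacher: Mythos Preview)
Your proposal is correct and matches the paper's approach: the paper's proof likewise derives \eqref{delta_aquila} (it actually proves that descent lemma inline, since Lemma~2 is stated without proof earlier) and then invokes Lemma~\ref{Lemma3} in one line to conclude.

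One small correction to your side remark: when $\mathcal{M}_c^k=\varnothing$ the sum $\tfrac{1}{M}\sum_{m\in\mathcal{M}_c^k}\Delta\boldsymbol{q}_m^k$ vanishes, but the quantization error $\|\boldsymbol{\varepsilon}^k\|_2^2$ does \emph{not}, so the residual term in \eqref{delta_aquila} does not disappear and the bound \eqref{non_cvx} is not automatic in that case. The paper treats $\mathcal{M}_c^k=\varnothing$ separately in Corollary~2 via a Young's-inequality argument and an additional step-size condition, so you should drop the claim that the theorem holds trivially without the restriction.
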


\begin{proof}
    Suppose Assumptions 4.1, 4.2, and 4.3 are satisfied and $M_c^k \neq \varnothing$. For the simplicity of the convergence proof, we assume $\Phi^k = \frac{1}{M}\sum_{m \in \mathcal{M}_c^k}\Delta \boldsymbol{q}_{m}^{k}$. First, we prove \cref{lemma4}.
\begin{equation}
\thinmuskip=0mu
\medmuskip=-0.5mu
\thickmuskip=0mu
\begin{aligned}
\phantom{\leqslant}& f(\boldsymbol{\theta}^{k+1})-f(\boldsymbol{\theta}^k) \\
\leqslant &\left\langle\nabla f(\boldsymbol{\theta}^k), \boldsymbol{\theta}^{k+1}-\boldsymbol{\theta}^k\right\rangle+\frac{L}{2}\left\|\boldsymbol{\theta}^{k+1}-\boldsymbol{\theta}^k\right\|_2^2\\
= & \left\langle\nabla f(\boldsymbol{\theta}^k),-\alpha\left(\nabla f(\boldsymbol{\theta}^k)-\boldsymbol{\varepsilon}^k-\Phi^k\right)\right\rangle+\frac{L}{2}\left\|\boldsymbol{\theta}^{k+1}-\boldsymbol{\theta}^k\right\|_2^2\\
=& -\alpha\left\|\nabla f(\boldsymbol{\theta}^k)\right\|_2^2+\alpha \left\langle\nabla f(\boldsymbol{\theta}^k),\boldsymbol{\varepsilon}^k+\Phi^k\right\rangle+\frac{L}{2}\left\|\boldsymbol{\theta}^{k+1}-\boldsymbol{\theta}^k\right\|_2^2 \\
 \overset{\eqref{inner_pro}}{=}& -\alpha\left\|\nabla f(\boldsymbol{\theta}^k)\right\|_2^2+ \frac{\alpha}{2}\left( \left\|\nabla f(\boldsymbol{\theta}^k)\right\|_2^2 + \left\|\boldsymbol{\varepsilon}^k+\Phi^k\right\|_2^2 \right.\\
 &\left.- \frac{1}{\alpha^2} \left\|\boldsymbol{\theta}^{k+1} - \boldsymbol{\theta}^{k}\right\|_2^2 \right) +\frac{L}{2}\left\|\boldsymbol{\theta}^{k+1}-\boldsymbol{\theta}^k\right\|_2^2 \\
\leqslant &  -\frac{\alpha}{2}\left\|\nabla f(\boldsymbol{\theta}^k)\right\|_2^2 + \frac{\alpha}{2}\left\|\boldsymbol{\varepsilon}^k+\Phi^k\right\|_2^2 + \left( \frac{L}{2} - \frac{1}{2\alpha}\right) \left\|\boldsymbol{\theta}^{k+1} - \boldsymbol{\theta}^{k}\right\|_2^2 \\
 \overset{\eqref{sum_ineq}}{\leqslant} & -\frac{\alpha}{2}\left\|\nabla f(\boldsymbol{\theta}^k)\right\|_2^2 + \alpha \left\|\boldsymbol{\varepsilon}^k\right\|_2^2+\alpha\left\|\Phi^k\right\|_2^2 \\
 &+ \left( \frac{L}{2} - \frac{1}{2\alpha}\right) \left\|\boldsymbol{\theta}^{k+1} - \boldsymbol{\theta}^{k}\right\|_2^2.
\end{aligned}
\label{lemma4_1}
\end{equation}
Hence, by \Cref{Lemma3}, it gives us \cref{theorem1}. 
\end{proof}

\begin{corollary}
\label{corollary1}
\textit{Let all the assumptions of \cref{theorem1} hold and $\frac{L}{2}-\frac{1}{2 \alpha}+\frac{\beta\gamma}{\alpha} \leqslant 0$, then the AQUILA requires
\begin{equation}
    K = \mathcal{O}\left(\frac{2 \omega_1}{\alpha \epsilon^2}\right)
\end{equation} 
communication rounds with $\omega_1\!=\!f\left(\boldsymbol{\theta}^1\right)\!-\!f\left(\boldsymbol{\theta}^{*}\right)\!+\! \frac{\beta\gamma}{\alpha} \left\|\boldsymbol{\theta}^{1}\!-\!\boldsymbol{\theta}^{0}\right\|_2^2$ to achieve $\min _{k} \|\nabla f(\boldsymbol{\theta}^{k})\|_2^2 \leqslant \epsilon^2$.}
\end{corollary}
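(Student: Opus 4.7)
The plan is to convert the single-step descent bound from Theorem 1 into a telescoping inequality by introducing a simple Lyapunov-type potential that absorbs the lagged term $\frac{\beta\gamma}{\alpha}\|\boldsymbol{\theta}^{k}-\boldsymbol{\theta}^{k-1}\|_2^2$. Specifically, I would define
\begin{equation}
V^{k} \;\coloneqq\; f(\boldsymbol{\theta}^{k}) \;+\; \frac{\beta\gamma}{\alpha}\,\bigl\|\boldsymbol{\theta}^{k}-\boldsymbol{\theta}^{k-1}\bigr\|_2^2.
\end{equation}
Adding and subtracting $\frac{\beta\gamma}{\alpha}\|\boldsymbol{\theta}^{k+1}-\boldsymbol{\theta}^{k}\|_2^2$ to the right-hand side of \eqref{non_cvx} and rearranging should give
\begin{equation}
V^{k+1}-V^{k} \;\leqslant\; -\frac{\alpha}{2}\bigl\|\nabla f(\boldsymbol{\theta}^{k})\bigr\|_2^2 \;+\; \Bigl(\tfrac{L}{2}-\tfrac{1}{2\alpha}+\tfrac{\beta\gamma}{\alpha}\Bigr)\bigl\|\boldsymbol{\theta}^{k+1}-\boldsymbol{\theta}^{k}\bigr\|_2^2.
\end{equation}
Under the step-size assumption $\frac{L}{2}-\frac{1}{2\alpha}+\frac{\beta\gamma}{\alpha}\leqslant 0$, the second term is non-positive, so the potential is monotonically non-increasing and
\[
V^{k+1}-V^{k}\;\leqslant\;-\tfrac{\alpha}{2}\|\nabla f(\boldsymbol{\theta}^{k})\|_2^2.
\]

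Next I would telescope this bound from $k=1$ to $K$. Using $V^{K+1}\geqslant f(\boldsymbol{\theta}^{K+1})\geqslant f^{*}$ (since the quadratic piece is non-negative) and $V^{1}=f(\boldsymbol{\theta}^{1})+\frac{\beta\gamma}{\alpha}\|\boldsymbol{\theta}^{1}-\boldsymbol{\theta}^{0}\|_2^2$, this yields
\begin{equation}
\tfrac{\alpha}{2}\sum_{k=1}^{K}\bigl\|\nabla f(\boldsymbol{\theta}^{k})\bigr\|_2^2 \;\leqslant\; V^{1}-V^{K+1}\;\leqslant\; \omega_{1}.
\end{equation}
Finally, lower-bounding the sum by $K\cdot\min_{k}\|\nabla f(\boldsymbol{\theta}^{k})\|_2^2$ gives
\[
\min_{1\leqslant k\leqslant K}\bigl\|\nabla f(\boldsymbol{\theta}^{k})\bigr\|_2^2 \;\leqslant\; \tfrac{2\omega_{1}}{\alpha K},
\]
so demanding the right-hand side to be $\leqslant\epsilon^{2}$ gives the advertised $K=\mathcal{O}\!\bigl(2\omega_{1}/(\alpha\epsilon^{2})\bigr)$ iteration complexity.

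The main obstacle is really just spotting the right potential: the descent inequality of Theorem 1 is not a proper one-step contraction on $f$ alone, because of the $\|\boldsymbol{\theta}^{k}-\boldsymbol{\theta}^{k-1}\|_2^2$ term on the right. Without augmenting $f(\boldsymbol{\theta}^{k})$ by the lagged quadratic, telescoping would leave a boundary error term that does not vanish. Once the potential $V^{k}$ is chosen, everything else is algebraic; one only has to be careful that the sum starts at $k=1$ (so the quadratic penalty at initialization appears in $\omega_{1}$) and that $V^{K+1}$ is lower-bounded by $f^{*}$ using Assumption~2. No additional smoothness or structural arguments beyond those already used in Theorem 1 are required.
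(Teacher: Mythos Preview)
Your proposal is correct and essentially the same as the paper's proof: the paper sums \eqref{non_cvx} over $k=1,\dots,K$ directly and regroups the $\|\boldsymbol{\theta}^{k+1}-\boldsymbol{\theta}^{k}\|_2^2$ terms so that the middle ones carry the coefficient $\tfrac{L}{2}-\tfrac{1}{2\alpha}+\tfrac{\beta\gamma}{\alpha}\leqslant 0$, which is exactly what your potential $V^{k}$ accomplishes in compact form. The remaining steps---bounding the minimum by the average and invoking $f(\boldsymbol{\theta}^{K+1})\geqslant f^{*}$---are identical.
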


\begin{proof}
    Sum \eqref{non_cvx} up for $k = 1, 2, \cdots, K$, we have
\begin{equation}
\thinmuskip=0mu
\medmuskip=0mu
\thickmuskip=0mu
\begin{aligned}
    &f(\boldsymbol{\theta}^{K+1})-f(\boldsymbol{\theta}^1) \leqslant -\frac{\alpha}{2} \sum_{k=1}^{K}\left\|\nabla f(\boldsymbol{\theta}^k)\right\|_2^2 + \left( \frac{L}{2} - \frac{1}{2\alpha} \right) \left\|\boldsymbol{\theta}^{K+1} - \right.\\
    &\left.\boldsymbol{\theta}^{K}\right\|_2^2 
    + \sum_{k=1}^{K-1} \left( \frac{L}{2} - \frac{1}{2\alpha} + \frac{\beta\gamma}{\alpha}\right) \left\|\boldsymbol{\theta}^{k+1} - \boldsymbol{\theta}^{k}\right\|_2^2 + \frac{\beta\gamma}{\alpha} \left\|\boldsymbol{\theta}^{1} - \boldsymbol{\theta}^{0}\right\|_2^2.
\end{aligned}
\label{descent}
\end{equation}
Notice that inequality \eqref{descent} holds for both $M_c^k \neq \varnothing$ and $M_c^k = \varnothing$. 
Therefore, for $\left(\frac{L}{2}-\frac{1}{2 \alpha}+\frac{\beta\gamma}{\alpha} \right) \leqslant 0$ and all hyperparameters are chosen properly, considering the minimum of $\|\nabla f(\boldsymbol{\theta}^k)\|_2^2$
\begin{equation}
\thinmuskip=0mu
\medmuskip=0mu
\thickmuskip=0mu
\begin{aligned}
\min _{k=1, 2, \cdots, K} &\left\|\nabla f(\boldsymbol{\theta}^{k})\right\|_2^2 \\ \leqslant & \frac{1}{K} \sum_{k=1}^{K}\left\|\nabla f(\boldsymbol{\theta}^k)\right\|_2^2 \\
\overset{\eqref{descent}}{\leqslant} & \frac{2}{\alpha K}\left(f(\boldsymbol{\theta}^{1})-f(\boldsymbol{\theta}^{K}) + \frac{\beta\gamma}{\alpha} \left\|\boldsymbol{\theta}^{1} - \boldsymbol{\theta}^{0}\right\|_2^2 \right).
\end{aligned}
\end{equation}
For $\left(\frac{L}{2}-\frac{1}{2 \alpha}+\frac{\beta\gamma}{\alpha} \right) \leqslant 0$ and all hyperparameters are chosen properly, we have that
\begin{equation}
\thinmuskip=0mu
\medmuskip=0mu
\thickmuskip=0mu
\begin{aligned}
\min _{k=1, 2, \cdots, K} &\left\|\nabla f(\boldsymbol{\theta}^{k})\right\|_2^2 \\ \leqslant & \frac{2}{\alpha K}\left(f(\boldsymbol{\theta}^{1})-f(\boldsymbol{\theta}^{*}) + \frac{\beta\gamma}{\alpha} \left\|\boldsymbol{\theta}^{1} - \boldsymbol{\theta}^{0}\right\|_2^2\right) \leqslant \epsilon^2,
\end{aligned}
\end{equation}
which demonstrates AQUILA requires $K = \mathcal{O}\left(\cfrac{2 \omega_1}{\alpha\epsilon^2}  \right)$ communication round with $\omega_1 = f(\boldsymbol{\theta}^{1})-f(\boldsymbol{\theta}^{*})+\frac{\beta\gamma}{\alpha} \left\|\boldsymbol{\theta}^{1} - \boldsymbol{\theta}^{0}\right\|_2^2$ to achieve $\min _{k=1, 2, \cdots, K} \left\|\nabla f(\boldsymbol{\theta}^{k})\right\|_2^2 \leqslant \epsilon^2$.
\end{proof}
\begin{corollary}
\textit{As a specific case for \cref{corollary1}, we also proof the feasibility of our algorithm in an extreme condition: all devices skip for updating in round $k$, i.e., $M_c^k = \varnothing$.}
\begin{proof}
    Since the skipping subset of devices are the empty set, from \eqref{lazy_aggre}, we have
    \begin{equation}
    \thinmuskip=0mu
    \medmuskip=0mu
    \thickmuskip=0mu
    \begin{aligned}
        \boldsymbol{\theta} ^{k+1} - \boldsymbol{\theta} ^{k} &= -\frac{\alpha}{M}\sum_{m \in \mathcal{M}^k}\left(\boldsymbol{q}_{m}^{k-1} + \Delta \boldsymbol{q}_{m}^{k}\right) - \frac{\alpha}{M} \sum_{m \in \mathcal{M}_{c}^{k}}\boldsymbol{q}_{m}^{k-1} \\ 
        &{=} - \frac{\alpha}{M}\sum_{m \in \mathcal{M}}\left(\boldsymbol{q}_{m}^{k-1} + \Delta \boldsymbol{q}_{m}^{k}\right) \\
        &\overset{\eqref{def_quanti_error}}{=} - \frac{\alpha}{M}\sum_{m \in \mathcal{M}}\left(\nabla f_m (\boldsymbol{\theta}^k) - \boldsymbol{\varepsilon}_m^k \right) \\
        &{=} - \alpha \left(\nabla f (\boldsymbol{\theta}^k) - \boldsymbol{\varepsilon}^k \right).
    \end{aligned}
    \end{equation}
    
    From \eqref{delta_aquila} we have:
    \begin{equation}
    \thinmuskip=0mu
    \medmuskip=-0mu
    \thickmuskip=0mu
    \label{36}
    \begin{aligned}
    &f(\boldsymbol{\theta}^{k+1})-f(\boldsymbol{\theta}^k)\\
    \leqslant&-\frac{\alpha}{2}\left\|\nabla f(\boldsymbol{\theta}^k)\right\|_2^2+\alpha \bigg\|\frac{1}{M}\sum_{m \in \mathcal{M}_c^k}\Delta \boldsymbol{q}_{m}^{k}\bigg\|_2^2\\
    &+\left(\frac{L}{2}-\frac{1}{2\alpha}\right)\left\|\boldsymbol{\theta}^{k+1}-\boldsymbol{\theta}^k\right\|_2^2 +\alpha\left\|\boldsymbol{\varepsilon}^k\right\|_2^2\\
    \leqslant& -\frac{\alpha}{2}\left\|\nabla f(\boldsymbol{\theta}^k)\right\|_2^2+\left(\frac{L}{2}-\frac{1}{2\alpha}\right)\left\|\boldsymbol{\theta}^{k+1}-\boldsymbol{\theta}^k\right\|_2^2+\alpha\left\|\boldsymbol{\varepsilon}^k\right\|_2^2\\
    \overset{\eqref{youngs}}{\leqslant}&-\frac{\alpha}{2}\left\|\nabla f(\boldsymbol{\theta}^k)\right\|_2^2+\alpha^2\left(\frac{L}{2}-\frac{1}{2 \alpha}\right)\bigg((1+p)\left\|\nabla f(\boldsymbol{\theta}^k)\right\|_2^2\\
    &+\left(1+p^{-1}\right) \left\| \boldsymbol{\varepsilon}^k \right\|_2^2\bigg)+\alpha \left\| \boldsymbol{\varepsilon}^k\right\|_2^2\\
    =&-\frac{\alpha}{2}\left\|\nabla f(\boldsymbol{\theta}^k)\right\|_2^2+\frac{1}{2}\left(\alpha^2 L-\alpha\right)(1+p)\left\|\nabla f(\boldsymbol{\theta}^k)\right\|_2^2 \\
    &+\frac{1}{2}\left(\alpha^2 L-\alpha\right)\left(1+p^{-1}\right)\left\|\boldsymbol{\varepsilon}^k\right\|_2^2+\alpha\left\|\boldsymbol{\varepsilon}^k\right\|_2^2\\
    =&\frac{\alpha}{2}\left(\left(\alpha L- 1\right)(1+p)-1\right)\left\|\nabla f(\boldsymbol{\theta}^k)\right\|_2^2 \\
    &+\frac{\alpha}{2}\left((\alpha L-1)\left(1+p^{-1}\right)+2\right)\left\|\boldsymbol{\varepsilon}^k\right\|_2^2.
    \end{aligned}
    \end{equation}
    
    If the factor of $\left\|\boldsymbol{\varepsilon}^k\right\|_2^2$ in \eqref{36} is less than or equal to $0$,
    \begin{equation}
        (\alpha L-1)\left(1+p^{-1}\right)+2 \leqslant 0,
        \label{40}
    \end{equation}
    then the factor of $\|\nabla f(\boldsymbol{\theta}^k)\|_2^2$ will be less than $-\frac{\alpha}{2}$, 
    which indicates that
    \begin{equation}
    f(\boldsymbol{\theta}^{k+1})-f(\boldsymbol{\theta}^k) \leqslant -\frac{\alpha}{2}\left\|\nabla f(\boldsymbol{\theta}^k)\right\|_2^2.
    \end{equation}

    Note that it is not difficult to demonstrate that \eqref{40} and $\frac{L}{2}-\frac{1}{2 \alpha}+\frac{\beta\gamma}{\alpha} \leqslant 0$ can actually be satisfied at the same time. For instance, we can set $p = 0.1, \alpha = 0.1, \beta = 0.25, \gamma = 2, L = 2.5$ that satisfies both of them.
\end{proof}
\end{corollary}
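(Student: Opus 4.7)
The plan is to specialize \cref{lemma4} (namely \eqref{delta_aquila}) to the hypothesis $\mathcal{M}_c^k=\varnothing$. When no device skips, the aggregate of skipped quantized innovations vanishes, $\frac{1}{M}\sum_{m\in\mathcal{M}_c^k}\Delta\boldsymbol{q}_m^k = \boldsymbol{0}$, so the descent inequality reduces to
\begin{equation*}
f(\boldsymbol{\theta}^{k+1})-f(\boldsymbol{\theta}^k)\leqslant -\frac{\alpha}{2}\|\nabla f(\boldsymbol{\theta}^k)\|_2^2 + \alpha\|\boldsymbol{\varepsilon}^k\|_2^2 + \Bigl(\tfrac{L}{2}-\tfrac{1}{2\alpha}\Bigr)\|\boldsymbol{\theta}^{k+1}-\boldsymbol{\theta}^k\|_2^2.
\end{equation*}
This eliminates the innovation term but retains the quantization error, so I can no longer invoke \cref{Lemma3} directly; I need another route to absorb $\|\boldsymbol{\varepsilon}^k\|_2^2$ into the gradient decrease.

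The key manipulation is to rewrite the server update in closed form. With every device participating, \eqref{lazy_aggre} together with the definition \eqref{def_quanti_error} yields $\boldsymbol{\theta}^{k+1}-\boldsymbol{\theta}^k = -\alpha\bigl(\nabla f(\boldsymbol{\theta}^k)-\boldsymbol{\varepsilon}^k\bigr)$. Substituting this into the squared-distance term and applying Young's inequality \eqref{youngs} with free parameter $p>0$ gives
\begin{equation*}
\|\boldsymbol{\theta}^{k+1}-\boldsymbol{\theta}^k\|_2^2 \leqslant \alpha^2(1+p)\|\nabla f(\boldsymbol{\theta}^k)\|_2^2 + \alpha^2(1+p^{-1})\|\boldsymbol{\varepsilon}^k\|_2^2.
\end{equation*}
Plugging this into the reduced descent inequality and collecting coefficients, I obtain a bound in which $\|\nabla f(\boldsymbol{\theta}^k)\|_2^2$ carries the coefficient $\frac{\alpha}{2}((\alpha L - 1)(1+p) - 1)$ and $\|\boldsymbol{\varepsilon}^k\|_2^2$ carries $\frac{\alpha}{2}((\alpha L-1)(1+p^{-1}) + 2)$.

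The final step is to force the error coefficient to be non-positive, i.e., $(\alpha L-1)(1+p^{-1})+2 \leqslant 0$. This automatically forces $\alpha L < 1$, which in turn drives the gradient coefficient strictly below $-\alpha/2$, recovering a clean descent $f(\boldsymbol{\theta}^{k+1})-f(\boldsymbol{\theta}^k)\leqslant -\frac{\alpha}{2}\|\nabla f(\boldsymbol{\theta}^k)\|_2^2$, from which the same $\mathcal{O}(1/(\alpha\epsilon^2))$ rate as \cref{corollary1} follows by telescoping.

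The main obstacle is not the algebra but the consistency check: the new constraint $(\alpha L-1)(1+p^{-1})+2\leqslant 0$ must coexist with the hypothesis $\frac{L}{2}-\frac{1}{2\alpha}+\frac{\beta\gamma}{\alpha}\leqslant 0$ of \cref{corollary1}. Both conditions squeeze $\alpha$ from opposite directions (the former asks $\alpha L$ small, the latter asks $\alpha$ not too small relative to $\beta\gamma$), so I would close the argument by exhibiting an explicit feasible tuple $(p,\alpha,\beta,\gamma,L)$ — for example, small $\alpha$, moderate $L$, and $p$ chosen so that $(1+p^{-1})(1-\alpha L)\geqslant 2$ — thereby certifying non-emptiness of the admissible parameter region and completing the corollary.
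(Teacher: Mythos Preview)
Your proposal is correct and follows essentially the same route as the paper: specialize \eqref{delta_aquila} to drop the skipped-innovation term, rewrite $\boldsymbol{\theta}^{k+1}-\boldsymbol{\theta}^k=-\alpha(\nabla f(\boldsymbol{\theta}^k)-\boldsymbol{\varepsilon}^k)$, apply Young's inequality \eqref{youngs}, and impose $(\alpha L-1)(1+p^{-1})+2\leqslant 0$ to kill the error term. One small remark: your claim that the two constraints ``squeeze $\alpha$ from opposite directions'' is off---both $(\alpha L-1)(1+p^{-1})+2\leqslant 0$ and $\frac{L}{2}-\frac{1}{2\alpha}+\frac{\beta\gamma}{\alpha}\leqslant 0$ reduce to upper bounds on $\alpha L$ (namely $\alpha L\leqslant\frac{1-p}{1+p}$ and $\alpha L\leqslant 1-2\beta\gamma$), so feasibility is immediate once $p<1$ and $\beta\gamma<\tfrac{1}{2}$; the paper simply exhibits a concrete tuple.
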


\begin{remark}[Compared to LAG]
    Corresponding to eq. (70) in \cite{chen2018lag}, LAG defines a Lyapunov function $\mathbb{V}^k:=f(\boldsymbol{\theta}^k)-f(\boldsymbol{\theta}^*)+\sum_{d=1}^D \beta_d\|\boldsymbol{\theta}^{k+1-d}-\boldsymbol{\theta}^{k-d}\|_2^2$ and claims that it satisfies
\begin{equation}
\mathbb{V}^{k+1}\!-\!\mathbb{V}^k \!\leq\!\!-\!\left(\frac{\alpha}{2}\!-\!\tilde{c}\left(\alpha, \beta_1\right)(1\!+\!\rho)\! \alpha^2\right)\left\|\nabla f(\boldsymbol{\theta}^k)\right\|_2^2,
\label{LAG_noncvx}
\end{equation}
where $\tilde{c}\left(\alpha, \beta_1\right) = L / 2 - 1 / (2\alpha) + \beta_1$, $\beta_1 = D \xi / (2\alpha\eta)$, $\xi < 1 / D$, and $\rho > 0$. The above result \eqref{LAG_noncvx} indicates that LAG requires
\begin{equation}
K_{LAG} = \mathcal{O}\left(\frac{2 \omega_1}{\left(\alpha - 2\tilde{c}\left(\alpha, \beta_1\right)(1+\rho) \alpha^2\right) \epsilon^2}\right)
\end{equation}
communication rounds to converge. Since the non-negativity of the term $\tilde{c}\left(\alpha, \beta_1\right)(1+\rho) \alpha^2$, we can readily derive that $\alpha < \alpha - 2\tilde{c}\left(\alpha, \beta_1\right)(1+\rho) \alpha^2$,
which demonstrates AQUILA achieves a better convergence rate than LAG with the appropriate selection of $\alpha$.
\end{remark}

\subsection{Convergence analysis under Polyak-Łojasiewicz condition.} 

\begin{assumption}($\mathbf{\mu-}$PŁ condition). 
\label{assumption4}
\textit{Function $f$ satisfies the PL condition with a constant $\mu > 0$, that is,}
\begin{equation}
    \left\|\nabla f(\boldsymbol{\theta}^k)\right\|_2^2 \geqslant 2 \mu (f(\boldsymbol{\theta}^k) - f(\boldsymbol{\theta}^*)).
    \label{PL_condtion}
\end{equation}
\end{assumption}

\begin{theorem}
\label{theorem2}
Suppose Assumptions 4.1, 4.2, and 4.3 be satisfied and $\mathcal{M}_c^k \neq \varnothing$, if the hyperparameters satisfy $\frac{\beta\gamma}{\alpha} \leqslant (1-\alpha\mu)\left(\frac{1}{2\alpha} - \frac{L}{2}\right)$, then the global objective function satisfies
\begin{equation}
\begin{aligned}
f(\boldsymbol{\theta}^{k+1})-f(\boldsymbol{\theta}^k) 
{\leqslant}& -\alpha\mu (f(\boldsymbol{\theta}^k)  - f(\boldsymbol{\theta}^*)) \\
&+ \left( \frac{L}{2} - \frac{1}{2\alpha} \right) \left\|\boldsymbol{\theta}^{k+1} -\boldsymbol{\theta}^{k}\right\|_2^2\\
&+ \frac{\beta\gamma}{\alpha} \left\|\boldsymbol{\theta}^{k} - \boldsymbol{\theta}^{k-1}\right\|_2^2,
\end{aligned}
\label{18}
\end{equation}
and the AQUILA requires 
\begin{equation}
    K = \mathcal{O}\left(- \frac{1}{\log (1 - \alpha \mu)}  \log \frac{\omega_1}{\epsilon}\right)
\end{equation} 
communication round with $\omega_1 = f(\boldsymbol{\theta}^1)-f(\boldsymbol{\theta}^{*})+\left(\frac{1}{2\alpha} - \frac{L}{2}\right) \left\|\boldsymbol{\theta}^{1} - \boldsymbol{\theta}^{0}\right\|_2^2$ to achieve $f(\boldsymbol{\theta}^{K+1})-f(\boldsymbol{\theta}^*)+ (\frac{1}{2\alpha} - \frac{L}{2}) \|\boldsymbol{\theta}^{K+1} - \boldsymbol{\theta}^{K}\|_2^2 \leqslant \epsilon$.
\end{theorem}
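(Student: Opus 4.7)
The plan is to prove Theorem 2 in two parts, using Theorem 1 as the starting point and introducing a Lyapunov function tailored to absorb the drift term.

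First, I would derive inequality (18) by directly combining Theorem 1 with the $\mu$-PŁ condition. Theorem 1 already supplies the descent inequality
\[
f(\boldsymbol{\theta}^{k+1})-f(\boldsymbol{\theta}^k)\leqslant -\frac{\alpha}{2}\|\nabla f(\boldsymbol{\theta}^k)\|_2^2 + \left(\frac{L}{2}-\frac{1}{2\alpha}\right)\|\boldsymbol{\theta}^{k+1}-\boldsymbol{\theta}^{k}\|_2^2 + \frac{\beta\gamma}{\alpha}\|\boldsymbol{\theta}^{k}-\boldsymbol{\theta}^{k-1}\|_2^2,
\]
and Assumption 4 lets me replace $-\frac{\alpha}{2}\|\nabla f(\boldsymbol{\theta}^k)\|_2^2$ with the tighter upper bound $-\alpha\mu(f(\boldsymbol{\theta}^k)-f(\boldsymbol{\theta}^*))$. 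Substituting immediately yields (18).

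Second, for the convergence rate, I would introduce the Lyapunov sequence
\[
V^k := f(\boldsymbol{\theta}^k)-f(\boldsymbol{\theta}^*) + \left(\frac{1}{2\alpha}-\frac{L}{2}\right)\|\boldsymbol{\theta}^{k}-\boldsymbol{\theta}^{k-1}\|_2^2,
\]
which matches precisely the quantity appearing in $\omega_1$ and in the target accuracy. Moving the $\|\boldsymbol{\theta}^{k+1}-\boldsymbol{\theta}^{k}\|_2^2$ term in (18) to the left-hand side yields
\[
V^{k+1}\leqslant (1-\alpha\mu)(f(\boldsymbol{\theta}^k)-f(\boldsymbol{\theta}^*)) + \frac{\beta\gamma}{\alpha}\|\boldsymbol{\theta}^{k}-\boldsymbol{\theta}^{k-1}\|_2^2,
\]
and invoking the hyperparameter condition $\frac{\beta\gamma}{\alpha}\leqslant(1-\alpha\mu)(\frac{1}{2\alpha}-\frac{L}{2})$ upper-bounds the drift coefficient by $(1-\alpha\mu)(\frac{1}{2\alpha}-\frac{L}{2})$, producing the geometric contraction $V^{k+1}\leqslant (1-\alpha\mu)V^k$. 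Iterating from $k=1$ to $K$ gives $V^{K+1}\leqslant(1-\alpha\mu)^K V^1=(1-\alpha\mu)^K\omega_1$, and requiring this to be at most $\epsilon$ solves for $K\geqslant \log(\omega_1/\epsilon)/(-\log(1-\alpha\mu))$, yielding the stated $\mathcal{O}$ bound.

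The main obstacle is pinpointing the correct Lyapunov function: the coefficient $\frac{1}{2\alpha}-\frac{L}{2}$ in front of $\|\boldsymbol{\theta}^{k}-\boldsymbol{\theta}^{k-1}\|_2^2$ is uniquely chosen so that the drift penalty entering at iteration $k+1$ is exactly compensated by the drift term carried in the Lyapunov function at iteration $k$, and only then does the theorem's hyperparameter condition deliver a genuine contraction rather than a mere descent. One also implicitly needs $0<\alpha\mu<1$ so that $1-\alpha\mu\in(0,1)$ and $-\log(1-\alpha\mu)>0$, ensuring the rate bound is meaningful; this is compatible with the stated hyperparameter condition whenever $\alpha$ is chosen small relative to $1/L$.
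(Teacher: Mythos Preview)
Your proposal is correct and follows essentially the same argument as the paper: apply the PŁ condition to Theorem~1 to obtain \eqref{18}, then recognize that $V^k=f(\boldsymbol{\theta}^k)-f(\boldsymbol{\theta}^*)+(\tfrac{1}{2\alpha}-\tfrac{L}{2})\|\boldsymbol{\theta}^{k}-\boldsymbol{\theta}^{k-1}\|_2^2$ contracts geometrically under the hyperparameter condition, and iterate. The only cosmetic difference is that you name $V^k$ a Lyapunov function explicitly, whereas the paper writes out the same quantity without that label.
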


\begin{proof}
Based on the intermediate result of \cref{theorem1} and \cref{assumption4} ($\mathbf{\mu-}$PŁ condition), we have
\begin{equation}
\thinmuskip=0mu
\medmuskip=0mu
\thickmuskip=0mu
\begin{aligned}
& f(\boldsymbol{\theta}^{k+1})-f(\boldsymbol{\theta}^k) \\
\leqslant & -\frac{\alpha}{2}\left\|\nabla f(\boldsymbol{\theta}^k)\right\|_2^2 + \left( \frac{L}{2} - \frac{1}{2\alpha} \right) \left\|\boldsymbol{\theta}^{k+1} - \boldsymbol{\theta}^{k}\right\|_2^2 + \frac{\beta\gamma}{\alpha} \left\|\boldsymbol{\theta}^{k} - \boldsymbol{\theta}^{k-1}\right\|_2^2 \\
\overset{\eqref{PL_condtion}}{\leqslant}&  -\alpha\mu (f(\boldsymbol{\theta}^k) - f(\boldsymbol{\theta}^*)) + \left( \frac{L}{2} - \frac{1}{2\alpha} \right) \left\|\boldsymbol{\theta}^{k+1} - \boldsymbol{\theta}^{k}\right\|_2^2 \\
&\qquad \qquad \qquad \qquad \qquad \qquad \qquad  + \frac{\beta\gamma}{\alpha} \left\|\boldsymbol{\theta}^{k} - \boldsymbol{\theta}^{k-1}\right\|_2^2,
\end{aligned}
\end{equation}
which is equivalent to 
\begin{equation}
\thinmuskip=0mu
\medmuskip=0mu
\thickmuskip=0mu
\begin{aligned}
f(\boldsymbol{\theta}^{k+1})-f(\boldsymbol{\theta}^*) 
& \overset{\eqref{PL_condtion}}{\leqslant} (1-\alpha\mu) (f(\boldsymbol{\theta}^k) - f(\boldsymbol{\theta}^*)) \\
&+ \left( \frac{L}{2} - \frac{1}{2\alpha} \right) \left\|\boldsymbol{\theta}^{k+1} - \boldsymbol{\theta}^{k}\right\|_2^2 + \frac{\beta\gamma}{\alpha} \left\|\boldsymbol{\theta}^{k} - \boldsymbol{\theta}^{k-1}\right\|_2^2.
\end{aligned}
\end{equation}
Suppose $\frac{\beta\gamma}{\alpha} \leqslant (1-\alpha\mu)\left(\frac{1}{2\alpha} - \frac{L}{2}\right)$, we can show that
\begin{equation}
\begin{aligned}
&f(\boldsymbol{\theta}^{k+1})-f(\boldsymbol{\theta}^*) + \left(\frac{1}{2\alpha} - \frac{L}{2}\right) \left\|\boldsymbol{\theta}^{k+1} - \boldsymbol{\theta}^{k}\right\|_2^2 \\
\leqslant& (1 -\alpha\mu) \left(f(\boldsymbol{\theta}^k) - f(\boldsymbol{\theta}^*) + \left(\frac{1}{2\alpha} - \frac{L}{2}\right) \left\|\boldsymbol{\theta}^{k} - \boldsymbol{\theta}^{k-1}\right\|_2^2\right).
\end{aligned}
\end{equation}
Therefore, after multiply $k = 1, 2, \cdots, K$, we have 
\begin{equation}
\thinmuskip=0mu
\medmuskip=0mu
\thickmuskip=0mu
\begin{aligned}
\hspace*{\fill} &f(\boldsymbol{\theta}^{K+1})-f(\boldsymbol{\theta}^*) + \left(\frac{1}{2\alpha} - \frac{L}{2}\right) \left\|\boldsymbol{\theta}^{K+1} - \boldsymbol{\theta}^{K}\right\|_2^2 \\
\leqslant& (1 -\alpha\mu)^K \left(f(\boldsymbol{\theta}^1) - f(\boldsymbol{\theta}^*)+ \left(\frac{1}{2\alpha} - \frac{L}{2}\right) \left\|\boldsymbol{\theta}^{1} - \boldsymbol{\theta}^{0}\right\|_2^2\right) \leqslant \epsilon,
\end{aligned}
\end{equation}
which demonstrates that our proposed AQUILA requires $K = \mathcal{O}\left(- \frac{1}{\log (1 - \alpha \mu)}  \log \frac{\omega_1}{\epsilon}\right)$ communication round with $\omega_1 = f(\boldsymbol{\theta}^1)-f(\boldsymbol{\theta}^{*})+\left(\frac{1}{2\alpha} - \frac{L}{2}\right) \|\boldsymbol{\theta}^{1} - \boldsymbol{\theta}^{0}\|_2^2$ to achieve $f(\boldsymbol{\theta}^{K + 1})-f(\boldsymbol{\theta}^*)+ \left(\frac{1}{2\alpha} - \frac{L}{2}\right) \|\boldsymbol{\theta}^{K+1} - \boldsymbol{\theta}^{K}\|_2^2 \leqslant \epsilon$.
\end{proof}

\begin{remark}[Compared to LAG]
According to eq. (50) in \cite{chen2018lag}, we have that
\begin{equation}
{\mathbb{V}^K} \leq \left(1- \alpha\mu + \alpha\mu\sqrt{D \xi}\right)^K {\mathbb{V}^0},
\end{equation}
where $\xi < 1 / D$. Thus, we have that LAG requires
\begin{equation}
K_{LAG} =  \mathcal{O}\left(- \frac{1}{\log (1 - \alpha \mu + \alpha\mu\sqrt{D \xi})}  \log \frac{\omega_1}{\epsilon}\right)
\end{equation}
communication rounds to converge. Compared to \cref{theorem2}, we can derive that $\log (1 - \alpha \mu) < \log (1 - \alpha \mu + \alpha\mu\sqrt{D \xi})$, which indicates that AQUILA has a faster convergence than LAG under the PŁ condition.
\end{remark}

\begin{remark}
    We want to emphasize that LAQ introduces the Lyapunov function into its proof, making it extremely complicated. In addition, LAQ can only guarantee that the final objective function converges to a range of the optimal solution rather than an accurate optimum $f(\boldsymbol{\theta}^*)$. Nevertheless, as discussed in \cref{precise_skip}, we utilize the precise model difference in AQUILA as a surrogate for the global gradient and thus simplify the proof.
\end{remark}

\section{Experiments and Discussion}

\begin{table*}[htbp]
\renewcommand\arraystretch{1.7}
\tabcolsep=0.12cm
\centering
\caption{Numerical numbers of total communication bits in the \textbf{homogeneous} environment. \textit{Acc} denote the text accuracy (\%), \textit{PP} denotes the Perplexity, and \textit{Cost} is the value of total communication bits in the entire training process for all devices.}
\label{homo_table}
\resizebox{0.99\textwidth}{!}{
\begin{tabular}{cc|cc|cc|cc|cc|cc|cc|cc}
\hline
\multicolumn{2}{c|}{Total Comm Bits (GB)} & \multicolumn{2}{c|}{QSGD} & \multicolumn{2}{c|}{AdaQ} & \multicolumn{2}{c|}{LAQ} & \multicolumn{2}{c|}{LAdaQ} & \multicolumn{2}{c|}{LENA} & \multicolumn{2}{c|}{MARINA} & \multicolumn{2}{c}{\textbf{AQUILA}} \\ 
Dataset & Data split & Acc/PP & Cost & Acc/PP & Cost & Acc/PP & Cost & Acc/PP & Cost & Acc/PP & Cost & Acc/PP & Cost & Acc/PP & Cost \\ \hline
\multirow{3}{*}{\texttt{CF-10}} & IID-100 & 69.26 & 156.07 & 69.67 & 226.33 & 69.26 & 153.26 & 70.9 & 226.36 & 69.67 & 160.2 & 69.26 & 162.84 & 70.49 & \textbf{138.35} \\
 & IID & 93.38 & 15.61 & 94.85 & 34.19 & 92.65 & 15.22 & 92.65 & 34.18 & 94.12 & 15.95 & 94.12 & 16.28 & 96.32 & \textbf{4.59} \\
 & Non-IID & 92.65 & 15.61 & 91.91 & 20.39 & 94.85 & 14.48 & 94.85 & 19.86 & 93.38 & 17.64 & 94.12 & 16.28 & 94.12 & \textbf{11.53} \\ \hline
\multirow{3}{*}{\texttt{CF-100}} & IID-100 & 47.4 & 165.55 & 49.4 & 224.02 & 51.6 & 164.11 & 50.4 & 223.64 & 50.8 & 166.87 & 49.4 & 167.71 & 49. & \textbf{142.55} \\
 & IID & 67.65 & 16.56 & 64.71 & 28.68 & 68.38 & 16.28 & 63.97 & 14.41 & 68.38 & 16.63 & 68.38 & 16.77 & 75.74 & \textbf{3.98} \\
 & Non-IID & 83.09 & 8.28 & 83.82 & 14.54 & 81.62 & 8.27 & 80.15 & 14.25 & 84.56 & 9.19 & 80.88 & 8.49 & 79.41 & \textbf{6.12} \\ \hline
\multirow{2}{*}{\texttt{WT-2}} & IID-80 & 3.85 & 470.95 & 4.87 & 711.49 & 5.73 & 513.07 & 4.87 & 710.17 & 4.87 & 341.17 & 5.68 & 338.38 & 3.76 & \textbf{218.59} \\
 & IID & 1.68 & 134.56 & 1.68 & 340.97 & 1.72 & 106.92 & 1.68 & 170.40 & 1.68 & 150.07 & 1.68 & 136.31 & 1.75 & \textbf{71.91} \\ \hline
\end{tabular}%
}
\end{table*}

\begin{table*}[htbp]
\renewcommand\arraystretch{2.0}
\tabcolsep=0.12cm
\centering
\caption{Numerical numbers of total communication bits in the \textbf{heterogeneous} environment. \textit{Acc} denote the text accuracy (\%), \textit{PP} denotes the Perplexity, and \textit{Cost} is the value of total communication bits in the entire training process for all devices.}
\label{hetero_table}
\begin{tabular}{cc|cc|cc|cc|cc|cc|cc|cc}
\hline
\multicolumn{2}{c|}{Total Comm Bits (GB)} & \multicolumn{2}{c|}{QSGD} & \multicolumn{2}{c|}{Ada} & \multicolumn{2}{c|}{LAQ} & \multicolumn{2}{c|}{Ada+LAQ} & \multicolumn{2}{c|}{LENA} & \multicolumn{2}{c|}{MARINA} & \multicolumn{2}{c}{\textbf{AQUILA}} \\ 
Dataset & Data split & Acc/PP & Cost & Acc/PP & Cost & Acc/PP & Cost & Acc/PP & Cost & Acc/PP & Cost & Acc/PP & Cost & Acc/PP & Cost \\ \hline
\multirow{2}{*}{\texttt{CF-10}} & IID & 96.32 & 9.76 & 94.85 & 21.99 & 94.85 & 9.55 & 94.12 & 10.98 & 94.85 & 9.97 & 94.85 & 10.18 & 95.59 & \textbf{2.65} \\
 & Non-IID & 97.06 & 9.76 & 97.06 & 16.15 & 97.79 & 9.25 & 95.59 & 14.67 & 97.06 & 11.19 & 97.06 & 10.18 & 97.79 & \textbf{7.16} \\ \hline
\multirow{2}{*}{\texttt{CF-100}} & IID & 75. & 10.56 & 72.79 & 19.42 & 75. & 10.56 & 75.74 & 9.7 & 77.94 & 10.61 & 73.53 & 10.7 & 83.82 & \textbf{2.51} \\
 & Non-IID & 81.62 & 5.28 & 84.56 & 10.07 & 85.29 & 5.28 & 86.03 & 5.02 & 87.5 & 5.56 & 85.29 & 5.42 & 86.03 & \textbf{3.66} \\ \hline
\texttt{WT-2} & IID & 1.26 & 99.09 & 1.26 & 248.87 & 1.26 & 92.74 & 1.26 & 124.47 & 1.26 & 119.83 & 1.26 & 100.38 & 1.46 & \textbf{53.84} \\ \hline
\end{tabular}
\end{table*}

\subsection{Experiment setup}

In this paper, we evaluate AQUILA on \texttt{CIFAR-10}, \texttt{CIFAR-100} \cite{krizhevsky2009learning}, and \texttt{WikiText-2} dataset \cite{merity2016pointer}, considering IID, Non-IID data scenario, and heterogeneous model architecture (which is also a crucial challenge in FL) simultaneously.

The FL environment is simulated in Python 3.9 with \texttt{PyTorch 11.1} \cite{paszke2019pytorch} implementation. For the diversity of the neural network structures, we train \texttt{ResNet-18} \cite{he2016deep} at \texttt{CIFAR-10} (\texttt{CF-10}) dataset, \texttt{MobileNet-v2} \cite{sandler2018mobilenetv2} at \texttt{CIFAR-100} (\texttt{CF-100}) dataset, and \texttt{Transformer} \cite{vaswani2017attention} at \texttt{WikiText-2} (\texttt{WT-2}) dataset. 

As for the FL system setting, considering the large-scale feature of FL, we validate AQUILA on a large system with $M=100/80$ total devices for \texttt{CIFAR} $/$ \texttt{WikiText-2} dataset. The hyperparameters and additional details of our experiments are revealed in Appendix.C (the supplementary file).

\begin{figure*}[htbp]
    \centering
    \includegraphics[width=0.98\linewidth]{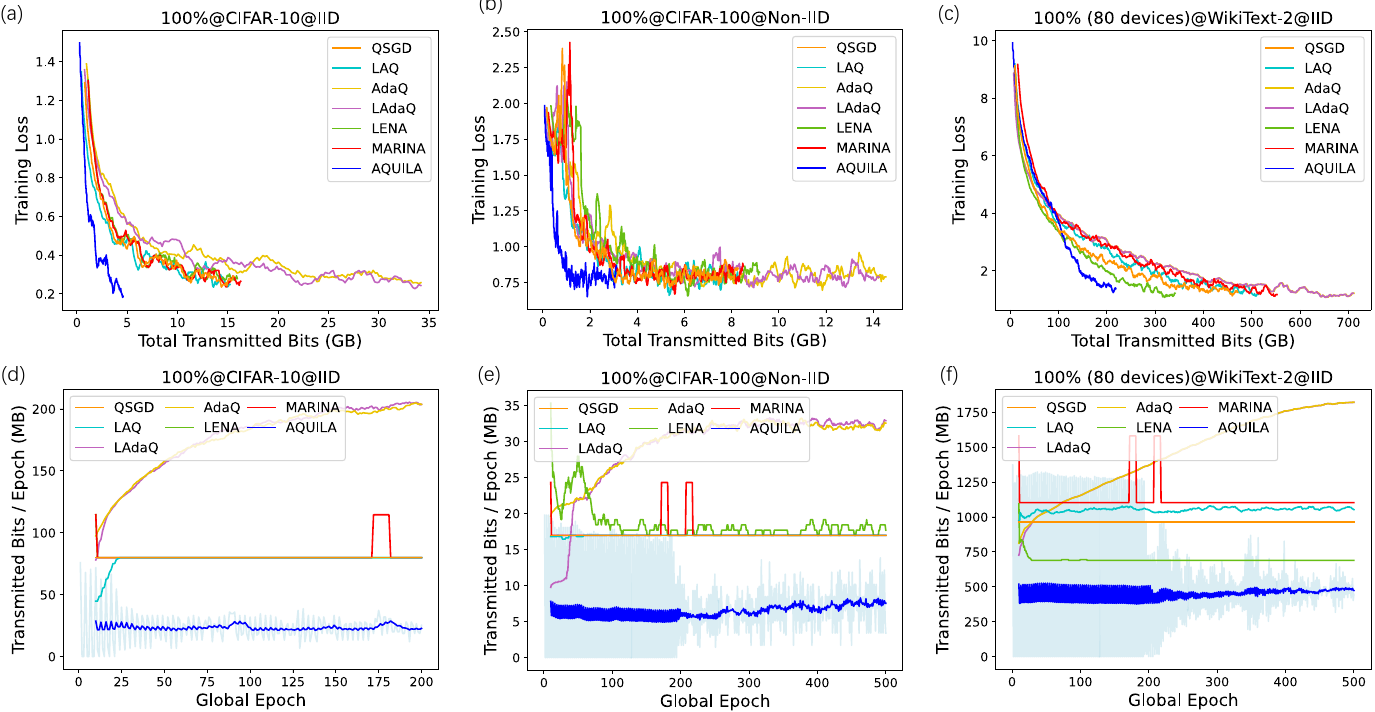}
    \vspace{-8pt}
    \caption{Comparison of AQUILA with other communication-efficient algorithms on IID and Non-IID settings with \textbf{homogeneous} model structure. (a)-(c): training loss v.s. total transmitted bits, (d)-(f): transmitted bits per epoch v.s. global epoch.}
    \label{homo_fig}
\end{figure*}

\begin{figure*}[htbp]
    \centering
    \includegraphics[width=0.98\linewidth]{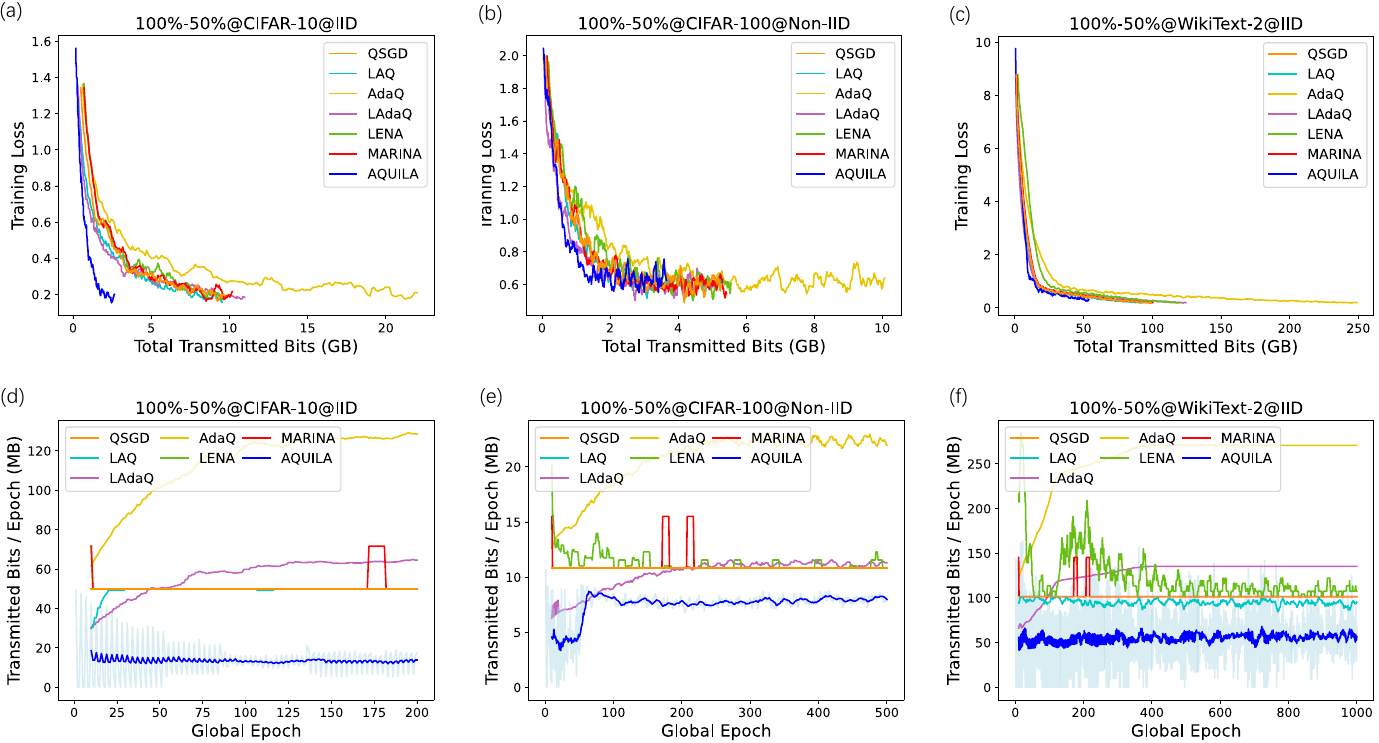}
    \vspace{-8pt}
    \caption{Comparison of AQUILA with other communication-efficient algorithms on IID and Non-IID settings with \textbf{heterogeneous} model structure. (a)-(c): training loss v.s. total transmitted bits, (d)-(f): transmitted bits per epoch v.s. global epoch.}
    \label{hetero_fig}
\end{figure*}

\subsection{Homogeneous environment}

We first evaluate AQUILA with homogeneous settings where all local models share the same model architecture as the global model. To better demonstrate the effectiveness of AQUILA, its performance is compared with several state-of-the-art methods, including AdaQuantFL, LAQ with fixed levels, LENA \cite{ghadikolaei2021lena}, MARINA \cite{gorbunov2021marina}, and the naive combination of AdaQuantFL with LAQ. Note that based on this homogeneous setting, we conduct both IID and Non-IID evaluations on \texttt{CIFAR-10} and \texttt{CIFAR-100} dataset, and an IID evaluation on \texttt{WikiText-2}. To simulate the Non-IID FL setting as \cite{diao2020heterofl}, each device is allocated two classes of data in \texttt{CIFAR-10} and 10 classes of data in \texttt{CIFAR-100} at most, and the amount of data for each label is balanced.

The experimental results are presented in \cref{homo_fig}, where \textit{100\%} implies all local models share a similar structure with the global model (i.e., homogeneity), \textit{100\% (80 devices)} denotes the experiment is conducted in an 80 devices system, and \textit{LAdaQ} represents the naive combination of AdaQuantFL and LAQ. For better illustration, the results have been smoothed by their standard deviation. The solid lines represent values after smoothing, and transparent shades of the same colors around them represent the true values. For the simplicity of the figure, we only display the quantization level change of AQUILA, and the comprehensive experimental results are established in Appendix.C (in a separated file). Additionally, \cref{homo_table} shows the total number of bits transmitted by all devices throughout the FL training process.


\subsection{Non-homogeneous scenario}
In this section, we also evaluate AQUILA with heterogeneous model structures as HeteroFL \cite{diao2020heterofl}, where the structures of local models trained on the device side are heterogeneous. Suppose the global model at epoch $k$ is $\boldsymbol{\theta}^{k}$ and its size is $d = w_g * h_g$, then the local model of each device $m$ can be selected by $\boldsymbol{\theta}_{m}^{k} = \boldsymbol{\theta}^{k}\left[: w_{m},\,: h_{m}\right]$, where $w_{m} = r_{m} w_{g}$ and $h_{m} = r_{m} h_{g}$, respectively. In this paper, we choose model complexity levels $r_m = 0.5$.

Most of the symbols in \cref{hetero_fig} are identical to the \cref{homo_fig}. \textit{100\%-50\%} is a newly introduced symbol that implies half of the devices share the same structure with the global model while another half only have 50\% * 50\% parameters as the global model.

\begin{figure*}[htbp]
    \centering
    \includegraphics[width=0.95\linewidth]{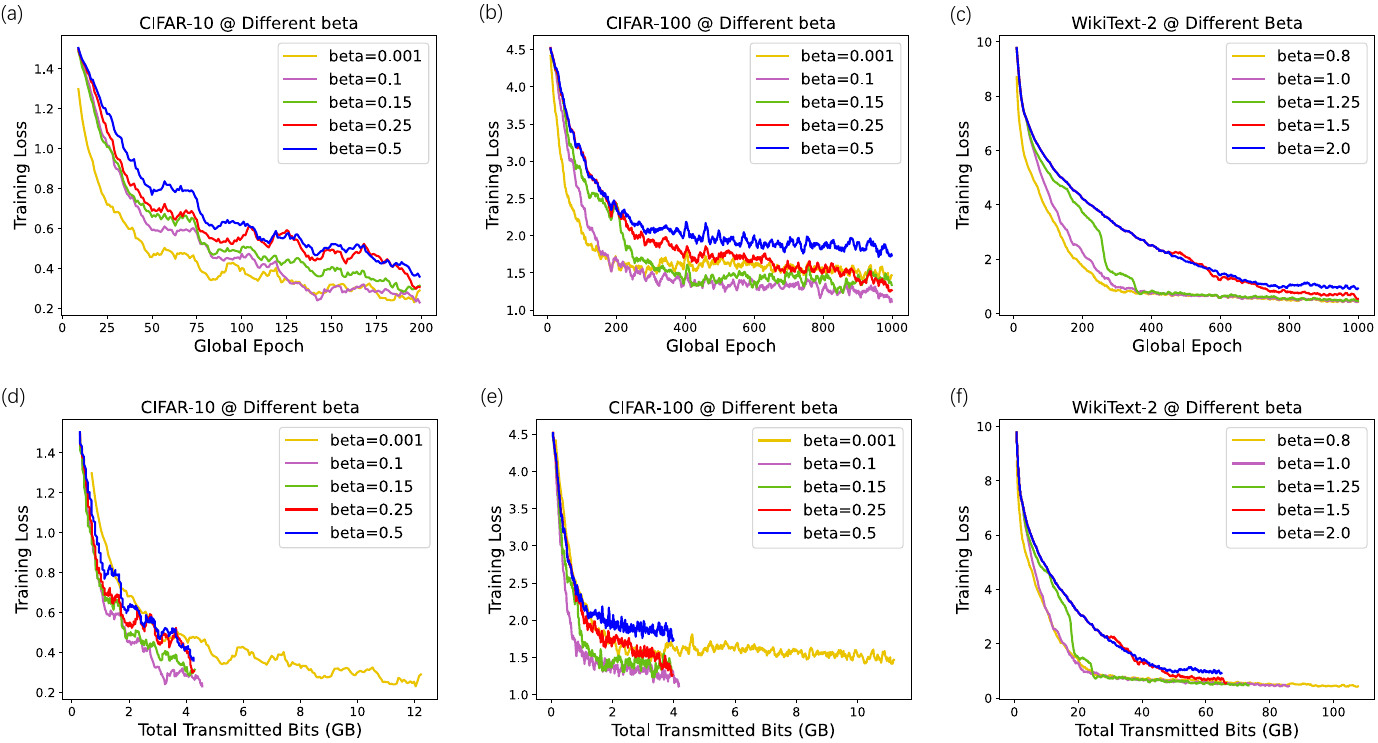}
    \vspace{-8pt}
    \caption{Loss comparison of AQUILA with various selections of the tuning factor $\beta$ in three datasets.}
    \label{diff_beta}
    \vspace{-12pt}
\end{figure*}

\begin{figure*}[htbp]
    \centering
    \includegraphics[width=0.95\linewidth]{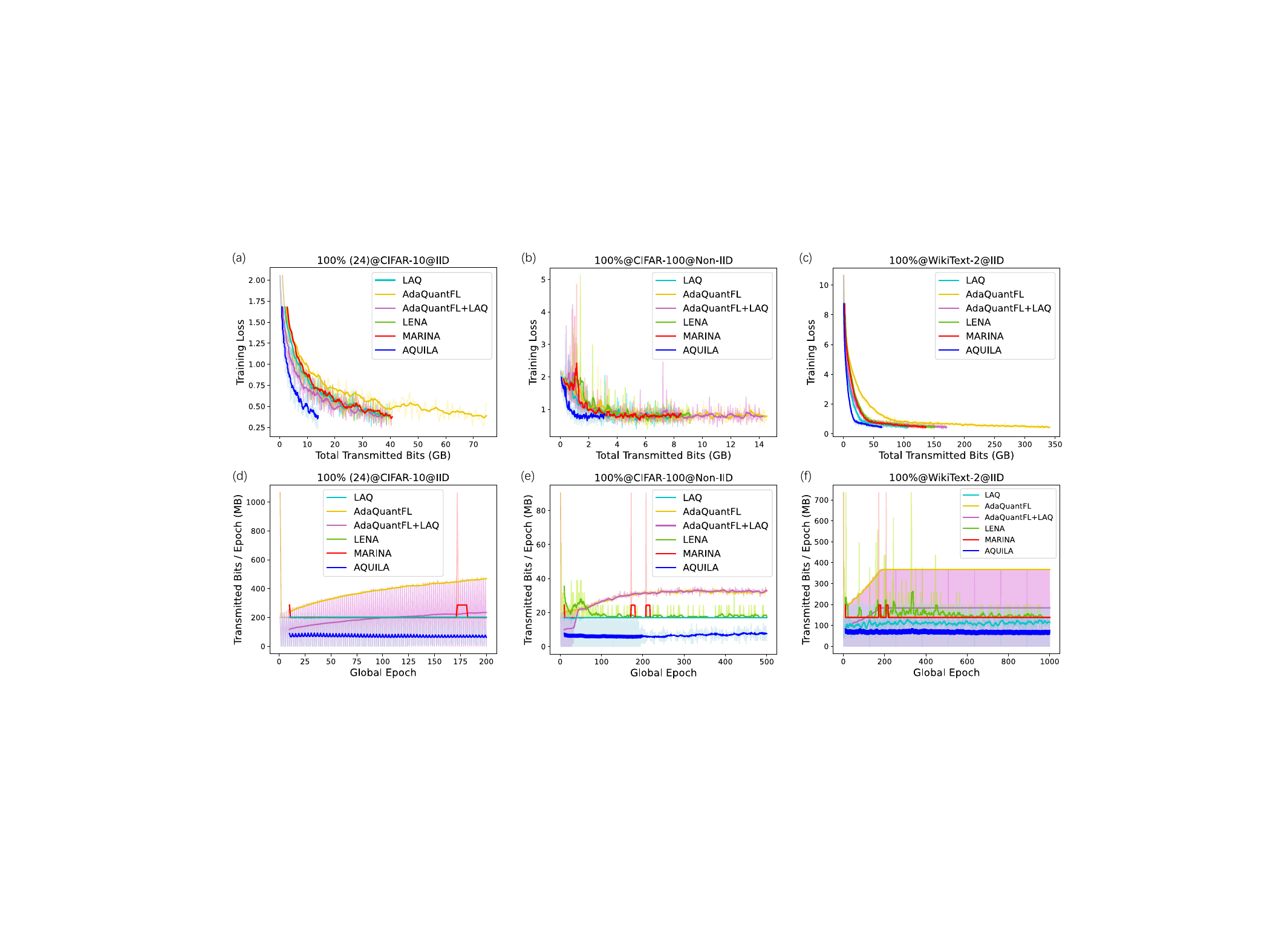}
    \vspace{-8pt}
    \caption{Accuracy (Perplexity) comparison of AQUILA with various selections of the tuning factor $\beta$ in three datasets.}
    \label{beta_acc}
    \vspace{-12pt}
\end{figure*}
\textbf{Performance Analysis.} First of all, AQUILA achieves a significant transmission reduction compared to the naive combination of LAQ and AdaQuantFL in all datasets, which demonstrates the superiority of AQUILA's efficiency. Specifically, \cref{homo_table} indicates that AQUILA saves 57.49\% of transmitted bits in the system of 80 devices at the \texttt{WikiText-2} dataset and reduces 23.08\% of transmitted bits in the system of 100 devices at the \texttt{CIFAR-100} dataset, compared to the naive combination. And other results in \cref{hetero_table} also show an obvious reduction in terms of the total transmitted bits required for convergence. 

Second, in \cref{homo_fig} and \cref{hetero_fig}, the changing trend of AQUILA's communication bits per each round clearly verifies the necessity and effectiveness of our well-designed adaptive quantization level and skip criterion. In these two figures, the number of bits transmitted in each round of AQUILA \textbf{fluctuates} a bit, indicating the effectiveness of AQUILA's selection rule. Meanwhile, the value of transmitted bits remains at quite a low level, suggesting that the adaptive quantization principle makes training more efficient. Moreover, the figures also inform that the quantization level selected by AQUILA will not continuously increase during training instead of being as AdaQuantFL. In addition, based on these two figures, we can also conclude that AQUILA converges faster under the same communication costs.

Finally, AQUILA is capable of adapting to a wide range of challenging FL circumstances. In the Non-IID scenario and heterogeneous model structure, AQUILA still outperforms other algorithms by significantly reducing overall transmitted bits while maintaining the same convergence property and objective function value. In particular, AQUILA reduces 60.4\% overall communication costs compared to LENA and 57.2\% compared to MARINA on average. These experimental results in non-homogeneous FL settings prove that AQUILA can be stably employed in more general and complicated FL scenarios.

\subsection{Ablation study on the impact of tuning factor $\beta$}
One key contribution of AQUILA is presenting a new device selection criterion \eqref{skip_rule} to reduce communication frequency. In this part, we evaluate the effects of the loss performance of different tuning factor $\beta$ value in \cref{diff_beta}. As $\beta$ grows within a certain range, the convergence speed of the model will slow down (due to device skipping). Still, it will eventually converge to the same model performance while considerably reducing the communication overhead. Nevertheless, increasing the value of $\beta$ will lead to a decrease in the final model performance since it skips so many essential uploads that make the training deficient. The accuracy (perplexity) comparison of AQUILA with various selections of the tuning factor $\beta$ is shown in \cref{beta_acc}, which indicates the same trend.To sum up, we should choose the value of factor $\beta$ to maintain the model's performance and minimize the total transmitted amount of bits. Specifically, we select the value of $\beta = 0.1, 0.25, 1.25$ for \texttt{CIFAR-10}, \texttt{CIFAR-100}, and \texttt{WikiText-2} datasets for our evaluation, respectively.

\section{Conclusions and Future Work}
This paper proposes AQUILA, an innovative strategy for adaptive quantization level selection and device selection in FL scenarios. 
Leveraging a novel combination of these strategies, AQUILA has been demonstrated to be capable of reducing the transmitted costs while maintaining the convergence guarantee and model performance compared to existing methods. The evaluation with Non-IID data distribution and various heterogeneous model architectures demonstrates that AQUILA is compatible in a non-homogeneous FL environment.

\section*{Acknowledgement}
This work was supported by the National Key R\&D Program of China under Grant No.2022ZD0160504, by Tsinghua Shenzhen International Graduate School-Shenzhen Pengrui Young Faculty Program of Shenzhen Pengrui Foundation (No. SZPR2023005), and by Tsinghua-Toyota Joint Research Institute inter-disciplinary Program and Tsinghua University (AIR)-Asiainfo Technologies (China) Inc. Joint Research Center under grant No. 20203910074. We would also like to thank anonymous reviewers for their insightful comments.


%

\bibliography{example_paper}
\bibliographystyle{IEEEtran}

\begin{IEEEbiography}
 [{\includegraphics[width=1in,height=1.25in, clip,keepaspectratio]{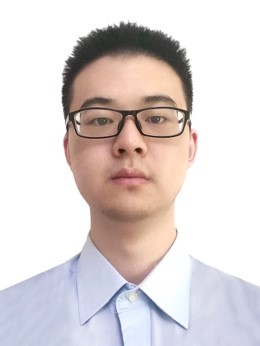}}]{Zihao Zhao}
 received his B.S. degree in University of Electronic Science and Technology of China (UESTC) in 2021. He is currently pursuing his M.S. degree in Data Science and Information Technology at Smart Sensing and Robotics (SSR) group, Tsinghua University. His research interests include Internet of Things (IoTs), Federated Learning, and Machine Learning. 
\end{IEEEbiography}
\vspace{-20pt}
\begin{IEEEbiography}[{\includegraphics[width=1in,height=1.25in, clip,keepaspectratio]{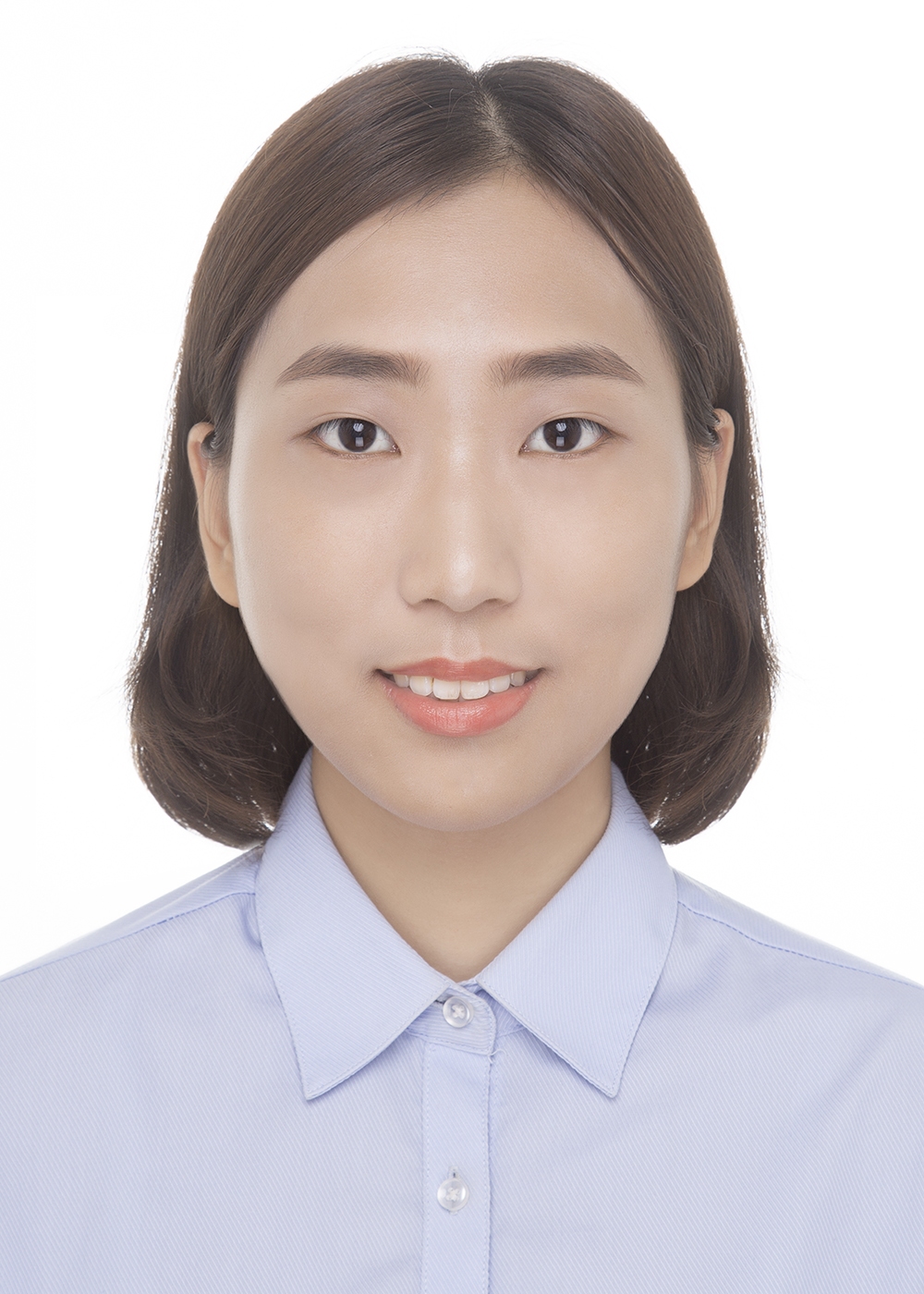}}]
{Yuzhu Mao} received the B.E. degree in computer science from Wuhan University, Wuhan, China, in 2020. Yuzhu Mao is currently pursuing her M.S. degree in Data Science and Information Technology at Smart Sensing and Robotics (SSR) group, Tsinghua University. Her research interests include Federated Learning, Internet of Things (IoTs), and Multi-agent Systems. 
\end{IEEEbiography}
\vspace{-20pt}
\begin{IEEEbiography}
 [{\includegraphics[width=1in, height=1.25in, clip,keepaspectratio]{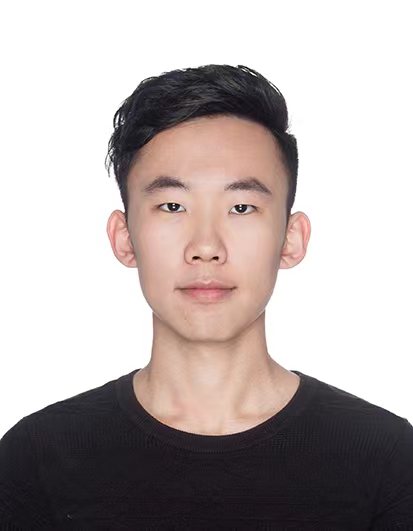}}]{Zhenpeng Shi}
received the B.S. degree in Statistics from Wuhan University in 2021. Zhenpeng Shi currently pursuing his M.S. degree in Data Science and Information Technology at Smart Sensing and Robotics(SSR) group, Tsinghua University. His research interest include Federated Learning, Reinforcement Learning and Multi-agent Systems.
\end{IEEEbiography}
\vspace{-20pt}
\begin{IEEEbiography}
 [{\includegraphics[width=1in,height=1.25in, clip,keepaspectratio]{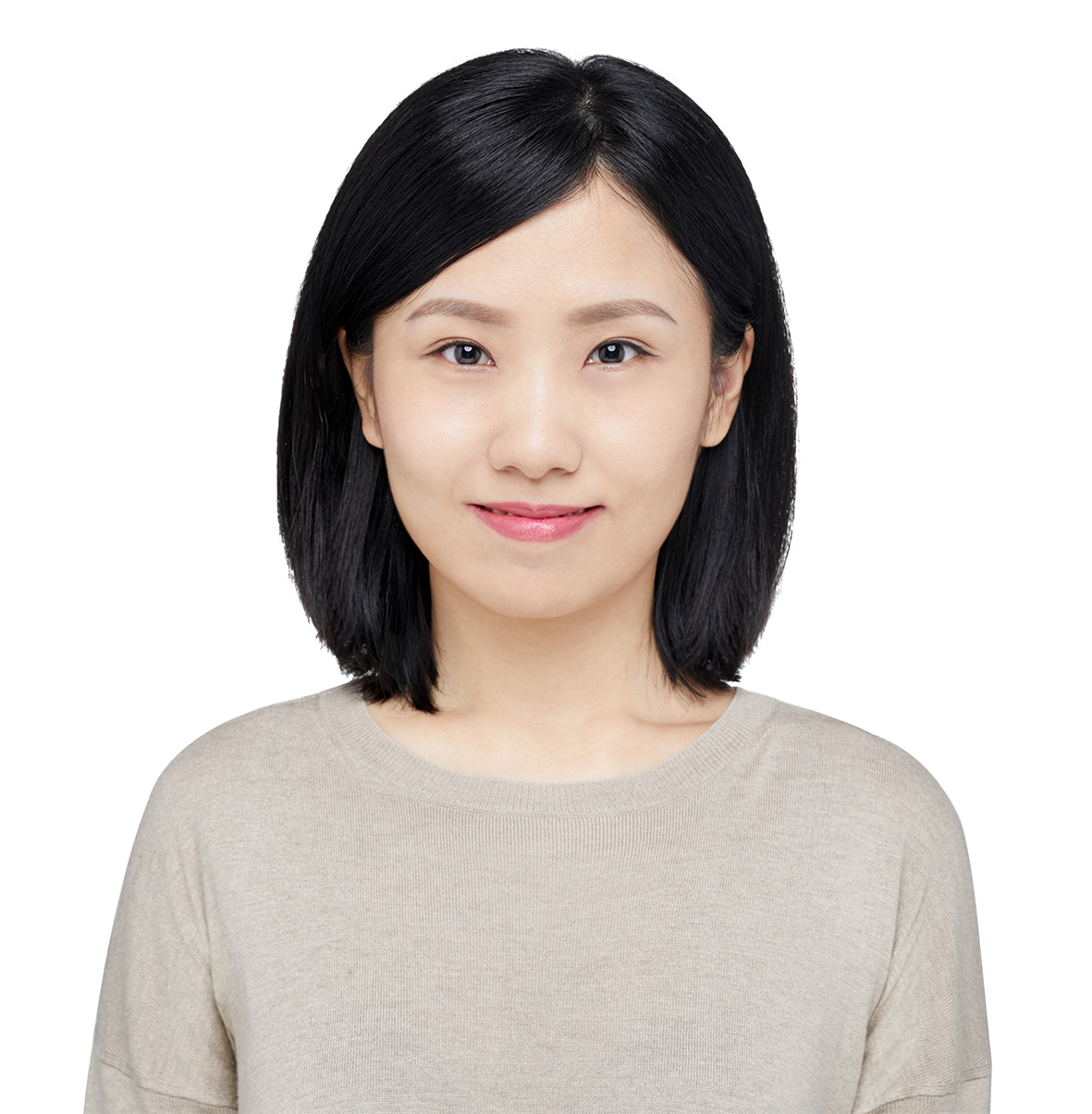}}]{Yang Liu}
is an associate professor with the Institute for AI Industry Research, Tsinghua University. Before joining Tsinghua, she was the principal researcher and research team lead at WeBank. Her research interests include federated learning, machine learning, multi-agent systems, statistical mechanics and AI industrial applications. Her research work was recognized with multiple awards, such as AAAI Innovation Award and CCF Technology Award. She is also named as Innovators on Privacy-Preserving Computation by MIT Technology Review China.
\end{IEEEbiography}
\vspace{-20pt}
\begin{IEEEbiography}
 [{\includegraphics[width=1in,height=1.25in, clip,keepaspectratio]{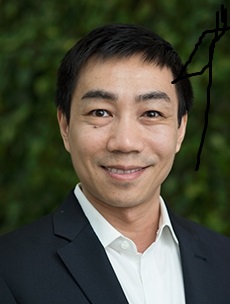}}]{Tian Lan}
received the B.A.Sc. degree from the Tsinghua University, China in 2003, the M.A.Sc. degree from the University of Toronto, Canada, in 2005, and the Ph.D. degree from the Princeton University in 2010. Dr. Lan is currently a full Professor of Electrical and Computer Engineering at the George Washington University. His research interests include network optimization, algorithms, and machine learning. He received the Meta Research Award in 2021, SecureComm Best Paper Award in 2019, SEAS Faculty Recognition Award in 2018, Hegarty Faculty Innovation Award in 2017, AT\&T VURI Award in 2015, IEEE INFOCOM Best Paper Award in 2012,  Wu Prizes for Excellence at Princeton University in 2010, IEEE GLOBECOM Best Paper Award in 2009, and IEEE Signal Processing Society Best Paper Award in 2008.
\end{IEEEbiography}
\vspace{-20pt}
\begin{IEEEbiography}
 [{\includegraphics[width=1in, height=1.25in, clip,keepaspectratio]{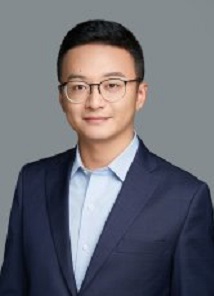}}]{Wenbo Ding}
received the BS and PhD degrees (Hons.) from Tsinghua University in 2011 and 2016, respectively. He worked as a postdoctoral research fellow at Georgia Tech under the supervision of Professor Z. L. Wang from 2016 to 2019. He is now an associate professor and PhD supervisor at Tsinghua-Berkeley Shenzhen Institute, Tsinghua Shenzhen International Graduate School, Tsinghua University, where he leads the Smart Sensing and Robotics (SSR) group. His research interests are diverse and interdisciplinary, which include self-powered sensors, human-machine interfaces, wearable devices for health and robotics with the help of signal processing, machine learning, and mobile computing. He has received many prestigious awards, including the Gold Medal of the 47th International Exhibition of Inventions Geneva and the IEEE Scott Helt Memorial Award. 
\end{IEEEbiography}
\vspace{-20pt}
\begin{IEEEbiography}
[{\includegraphics[width=1in,height=1.25in, clip,keepaspectratio]{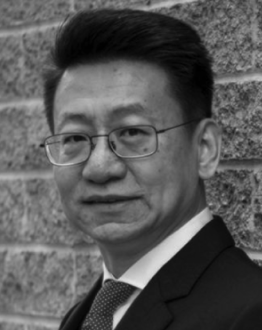}}] {Xiao-Ping Zhang}
received B.S. and Ph.D. degrees from Tsinghua University, in 1992 and 1996, respectively, both in Electronic Engineering. He holds an MBA in Finance, Economics and Entrepreneurship with Honors from the University of Chicago Booth School of Business, Chicago, IL. 

He is a Professor with Tsinghua-Berkeley Shenzhen Institute, Tsinghua University, and with the Department of Electrical, Computer and Biomedical Engineering, Toronto Metropolitan University (Formerly Ryerson University), Toronto, ON, Canada, where he is the Director of the Communication and Signal Processing Applications Laboratory. He is cross-appointed to the Finance Department at the Ted Rogers School of Management, Toronto Metropolitan University. He was a Visiting Scientist with the Research Laboratory of Electronics, Massachusetts Institute of Technology. His research interests include sensor networks and IoT, image and multimedia content analysis, machine learning, statistical signal processing, and applications in big data, finance, and marketing. 

Dr. Zhang is Fellow of the Canadian Academy of Engineering, Fellow of the Engineering Institute of Canada, Fellow of the IEEE, a registered Professional Engineer in Ontario, Canada, and a member of Beta Gamma Sigma Honor Society. He is the general Co-Chair for the IEEE International Conference on Acoustics, Speech, and Signal Processing, 2021. He is the general co-chair for 2017 and 2019 GlobalSIP Symposium on Signal, Information Processing and AI for Finance and Business. He was an elected Member of the ICME steering committee. He is the General Chair for the IEEE International Workshop on Multimedia Signal Processing, 2015. He is Editor-in-Chief for the IEEE JOURNAL OF SELECTED TOPICS IN SIGNAL PROCESSING. He is Senior Area Editor for the IEEE TRANSACTIONS ON IMAGE PROCESSING. He served as Senior Area Editor the IEEE TRANSACTIONS ON SIGNAL PROCESSING and Associate Editor for the IEEE TRANSACTIONS ON IMAGE PROCESSING, the IEEE TRANSACTIONS ON MULTIMEDIA, the IEEE TRANSACTIONS ON CIRCUITS AND SYSTEMS FOR VIDEO TECHNOLOGY, the IEEE TRANSACTIONS ON SIGNAL PROCESSING, and the IEEE SIGNAL PROCESSING LETTERS. He is the Chair of the IEEE Signal Processing Society Technical Committee on Image, Video, and Multidimensional Signal Processing (IVMSP). He received Sarwan Sahota Ryerson Distinguished Scholar Award -- the Ryerson University highest honor for scholarly, research and creative achievements. He is an IEEE Distinguished Lecturer of the IEEE Signal Processing Society, and of the IEEE Circuits and Systems Society. 
\end{IEEEbiography}
\vspace{-20pt}

\vfill

\end{document}